\newtheorem{proposition}{Proposition}
\newtheorem{theorem}{Theorem}
\newtheorem{lemma}{Lemma}
\newtheorem{assumption}{Assumption}
\DeclareMathOperator*{\argmax}{arg\,max}
\DeclareMathAlphabet{\altmathbb}{U}{bbold}{m}{n}
\newcommand{\orb}{\mathord{\cdot}}
\newcommand{\image}{\mathrm{Im}}
\definecolor{tolblue}{rgb}{0.267,0.467,0.667}
\definecolor{tolred}{rgb}{0.933,0.4,0.467}
\definecolor{tolyellow}{rgb}{0.8,0.733,0.267}
\definecolor{tolgreen}{rgb}{0.133,0.533,0.2}
\definecolor{tolpurple}{rgb}{0.667,0.2,0.467}
\title{Sample-efficient Bayesian Optimisation\\Using Known Invariances}
\author{%
  Theodore Brown$^{*1,2}$
  \qquad
  Alexandru Cioba$^{*3}$
  \qquad
  Ilija Bogunovic$^{1}$
  \\  
  $^*$Equal contribution
  \\
  $^1$University College London 
  \quad
  $^2$United Kingdom Atomic Energy Authority
  \quad 
  $^3$MediaTek Research
  \\
  \texttt{\{theodore.brown.23, i.bogunovic\}@ucl.ac.uk}
  \\
  \texttt{alexandru.cioba@mtkresearch.com}
}
\begin{document}

\maketitle
\vspace{-1ex}
\begin{abstract}
\vspace{-1ex}
    Bayesian optimisation (BO) is a powerful framework for global optimisation of costly functions, using predictions from Gaussian process models (GPs).
    In this work, we apply BO to functions that exhibit \textit{invariance} to a known group of transformations.
    We show that vanilla and constrained BO algorithms are inefficient when optimising such invariant objectives, and provide a method for incorporating group invariances into the kernel of the GP to produce \textit{invariance-aware} algorithms that achieve significant improvements in sample efficiency.
    We derive a bound on the maximum information gain of these invariant kernels, and provide novel upper and lower bounds on the number of observations required for invariance-aware BO algorithms to achieve $\epsilon$-optimality.
    We demonstrate our method's improved performance on a range of synthetic invariant and quasi-invariant functions.
    We also apply our method in the case where only some of the invariance is incorporated into the kernel, and find that these kernels achieve similar gains in sample efficiency at significantly reduced computational cost. 
    Finally, we use invariant BO to design a current drive system for a nuclear fusion reactor, finding a high-performance solution where non-invariant methods failed.
\end{abstract}
\vspace{-1ex}
\vspace{-1ex}
\section{Introduction}
\vspace{-1ex}
Bayesian optimisation (BO) has been applied to many global optimisation tasks in science and engineering, such as improving the performance of web servers \cite{letham2019bayesian}, reducing the losses in particle accelerators \cite{kirschner2022tuning}, or maximising the potency of a drug \cite{ahmadi2013predicting}. The most challenging real-world tasks tend to have large or continuous input domains, as well as target functions that are costly to evaluate. 

One particular example of interest is the design and operation of nuclear fusion reactors, which promise to deliver huge amounts of energy with zero greenhouse gas emissions.
However, the development of high-performance fusion power plant scenarios is a difficult optimisation task, involving expensive simulations and physical experiments \cite{tholerus2024flattop,buttery2019diii}.
For tasks like this, it is desirable to develop \textit{sample efficient} algorithms that only require a small number of evaluations to find the optimal solution.\looseness=-1

Crucially, the physical world is characterized by underlying structures that can be leveraged to significantly improve sample efficiency.
The concept of \emph{invariance}, which describes properties that remain constant under various transformations, is one of the most fundamental structures in nature.
A prime example of invariance is found in image classification tasks: a classifier should consistently identify a cat as a cat, regardless of its orientation, position, or size.
The central question we answer is the extent to which the sample efficiency of BO can be improved by exploiting these invariances.

We study a general approach for incorporating invariances into any BO algorithm. We represent invariant objective functions as elements of a reproducing kernel Hilbert space (RKHS), and produce invariance-aware versions of existing Gaussian process (GP) bandit optimisation algorithms by using the kernel of this RKHS.
We apply these algorithms to synthetic and nuclear fusion optimisation tasks, and show that incorporating the underlying invariance into the kernel significantly improves the sample efficiency.
We also investigate the behaviour of the algorithms when the full invariant kernel is not known, and provide an empirical study of the performance of invariance-aware BO on target functions that are `quasi-invariant' (modelled as a sum of invariant and non-invariant components).
Our main contribution is the derivation of novel upper and lower bounds on the regret achieved by invariance-aware BO algorithms.

\begin{figure}
     \centering
     \begin{subfigure}[b]{0.3\textwidth}
         \centering
         \includegraphics[width=\textwidth]{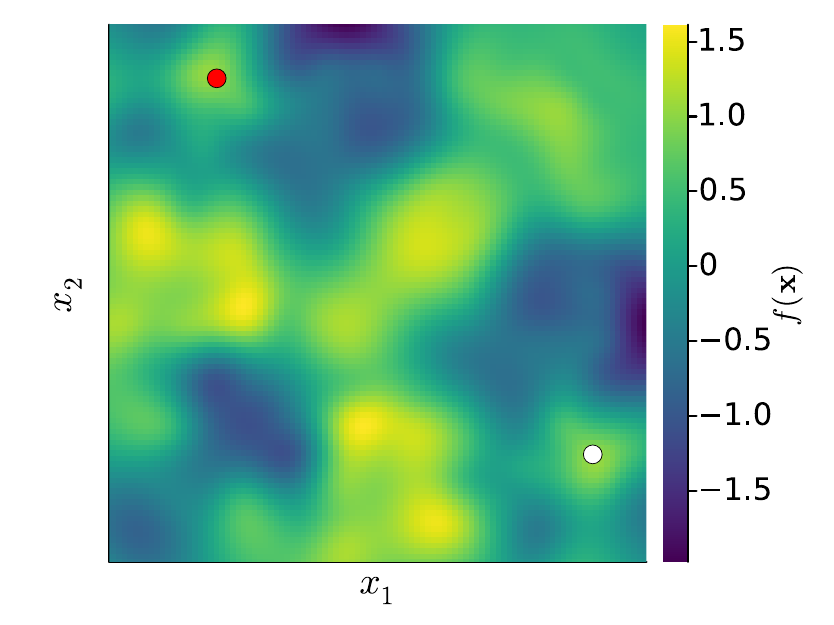}
         \caption{Permutation group, $S_2$}
         \label{fig:permutation_invariant_function}
     \end{subfigure}
     \hfill
     \begin{subfigure}[b]{0.37\textwidth}
         \centering
         \includegraphics[width=\textwidth]{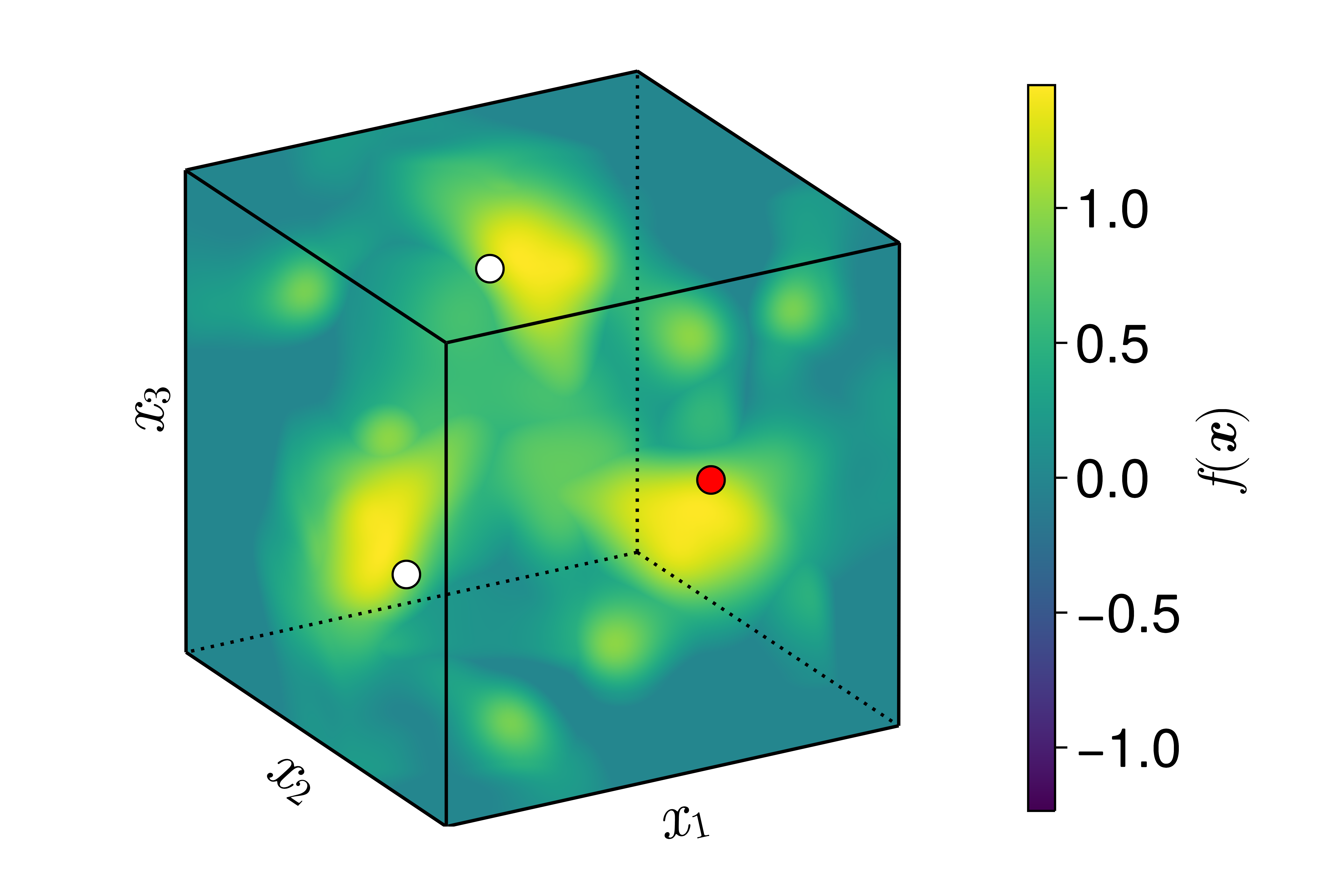}
         \caption{Cyclic group, $C_3$}
         \label{fig:cyclic_invariant_function}
     \end{subfigure}
     \hfill
     \begin{subfigure}[b]{0.3\textwidth}
         \centering
         \includegraphics[width=\textwidth]{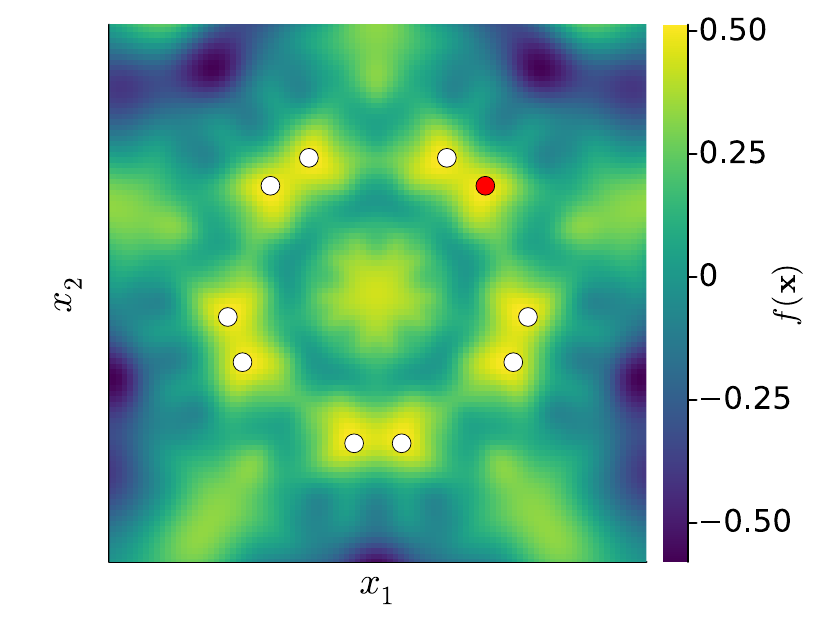}
         \caption{Dihedral group, $D_5$}
         \label{fig:dihedral_invariant_function}
     \end{subfigure}
        \caption{Examples of group-invariant functions, sampled from a Gaussian process with the corresponding invariant kernel. Note that observing a point $x$ (\textbf{\textcolor{tolred}{red}}) provides information about transformed locations $\{\sigma(x): \sigma \in G\}$ (white).}
        \label{fig:example_invariant_functions}
\end{figure}
\vspace{-1ex}
\subsection{Related work}
\vspace{-1ex}
\textbf{Invariance in deep learning.} 
The role of invariance in deep learning has been extensively explored in the literature.
Invariance, particularly to geometric transformations like rotation, scaling, and translation, is crucial for the robust performance of deep learning models. Many deep learning methods incorporate invariance via \textit{data augmentation}, in which additional training examples are generated by transforming the input data \cite{krizhevsky2012image, lecun1995learning}.
However, the cost of kernelised learning methods often scales poorly with the number of training points $n$ -- Gaussian process inference, for example, incurs $\mathcal{O}(n^3)$ time and $\mathcal{O}(n^2)$ memory cost \cite{williams2006gaussian} -- which means that data augmentation is prohibitively expensive. In this work, we adopt an alternative approach by directly embedding invariance properties into the model’s structure, which effectively reduces computational complexity \cite{van2020mdp, cohen2016group}.

\textbf{Structure-informed BO.} There is an extensive body of empirical BO literature that exploits known patterns present in the behaviour of a system.
In \cite{khatamsaz2023physics}, BO is applied after a variety of physics-informed feature transformations to the state; in \cite{zuo2021accelerating} the authors employ BO to handle optimization over crystal configurations with symmetries found using \cite{spglibv1}.
However, in these works, only empirical results are given and asymptotic rates on sample complexity are not estimated.

\textbf{Invariant kernel methods.}
The theory of group invariant kernels was introduced in \cite{haasdonk2007invariant, kondor2008group}.
Using these ideas, \cite{van2018learning} developed efficient variational methods for learning hyperparameters in invariant Gaussian processes.
Invariant kernel ridge regression (KRR) for finite groups has been studied for translation- and permutation-invariant kernels \cite{mei2021learning,bietti2021sample}.
\cite{behrooz2023exact} quantify the gain in sample complexity for kernels invariant to the actions of Lie groups on arbitrary manifolds and tie the sample complexity bounds to the dimensions of the group.
Motivated by the invariance properties of convolutional neural networks, \cite{kondor2008group,mei2021learning,bietti2021sample} focus on kernelised \textit{regression} with the Neural Tangent Kernel; we instead consider the kernelised \textit{bandit optimisation} setting with the more general Mat\'ern family of kernels.
These methods can be readily extended to less strict notions of so-called \textit{quasi}-invariance \cite{haasdonk2007invariant,van2018learning,bietti2021sample}.

\textbf{Mat\'ern kernels on manifolds.} The previous studies of invariant kernel algorithms rely on the spectral properties of kernels on the hypersphere \cite{mei2021learning,bietti2021sample}; the relevant spectral theory for Mat\'ern kernels on Riemannian manifolds (including the hypersphere) was developed in \cite{borovitskiy2020matern}.
These geometry-aware kernels were applied to optimisation tasks in \cite{jaquier2022geometry}; however, the structure they include is restricted to the geometry of the domain only, rather than invariance to a set of transformations.

\textbf{Regret bounds for Bayesian optimisation.} 
The techniques for upper-bounding the regret of Bayesian optimisation algorithms using the kernel-dependent quantity known as maximum information gain were developed in \cite{srinivas2010gaussian}.
Subsequent studies \cite{valko2013finite, bogunovic2016truncated, chowdhury2017kernelized, shekhar2018gaussian, janz2020bandit, vakili2021information} have focused on improving regret bounds in both cumulative and simple regret settings.
\cite{kassraie2022neural} provided an upper bound for regret in BO using the convolutional NTK, which exhibits invariance only to the cyclic group.
\cite{kassraie2022graph} explored graph neural network bandits, offering maximum information gain rates by embedding permutation invariance using additive kernels.
In \cite{kim2021bayesian}, a permutation-invariant kernel is constructed in the same way as ours, and a regret analysis provided.
Our work expands upon these approaches by considering more general (non-additive) kernels and broader transformation groups, demonstrating that the regret bounds from \cite{kassraie2022neural} can be shown to be a specialisation of ours.
Lower bounds for kernelised bandits, including their robust variants (\cite{bogunovic2018adversarially, bogunovic2022robust}), have been examined in \cite{scarlett2017lower} and \cite{cai2021lower}.
In this paper, we derive novel lower bounds on regret for the setting with invariant kernels.
\looseness=-1

\vspace{-1ex}
\subsection{Contributions}
\vspace{-1ex}
Our main contributions are:
\begin{itemize}[noitemsep, nolistsep, leftmargin=2em]
    \item We develop new upper bounds on sample complexity in Bayesian optimisation with invariant kernels, which explicitly quantify the gain in sample complexity due to the number of symmetries. We extend several results from the literature, as our statement applies to a wide class of groups. This is, to our knowledge, the first time such bounds are given in this degree of generality.
    \item We show how results from the literature (e.g. \cite{cai2021lower}) can be extended to obtain lower bounds on the sample complexity of invariant BO, using a novel construction and quantification of members of the RKHS of invariant functions. We extend the methods in the literature to the hyperspherical domain, giving a blueprint for similar constructions on other Riemannian manifolds.
    \item We conduct experiments that support our theoretical findings. We demonstrate that incorporating invariance to some \textit{subgroup} of transformations is sufficient to achieve good performance, a result of key importance when the full invariance is not known or is too expensive to compute. We also demonstrate the robustness of the invariant kernel method, in the case where the target function deviates from true invariance.
    \item We apply these methods to solve a real-world design problem from nuclear fusion engineering that is known to exhibit invariance; our invariance-aware method finds better solutions in fewer samples compared to standard approaches.
\end{itemize}

\vspace{-1ex}
\section{Problem statement}
\vspace{-1ex}

We begin by providing an overview of invariant function spaces.
Consider a finite subgroup of isometries, $G$, on a compact manifold, $\mathcal{X}$, embedded in $\mathbb R^d$.
A function $f : \mathcal{X} \to \mathbb{R}$ is \textit{invariant} to the action of $G$ if, for all $\sigma \in G$,
\begin{equation}
\label{eq:invariance}
    f\circ \sigma = f \quad \text{i.e.} \quad f(x) = f(\sigma(x)) \quad \forall x \in \mathcal{X}.
\end{equation}
We provide some examples of invariant functions for different groups $G$ in \cref{fig:example_invariant_functions}.

In this paper, we study the optimisation of invariant functions that belong to a reproducing kernel Hilbert space (RKHS).
The invariance of the RKHS elements translates to invariance properties of the kernel. 
For $k : \mathcal{X} \times \mathcal{X} \to \mathbb{R}$, we say that $k$ is \emph{simultaneously invariant} to the action of $G$ if, for all $\sigma \in G$ and $x, y \in \mathcal X$,
\begin{equation}
    \label{eq:simultaneously-invariant-kernel}
    k(\sigma(x), \sigma(y)) = k(x,y),
\end{equation}
and is \emph{totally invariant} to the action of $G$ if, for all $\sigma, \tau \in G$ and $x, y \in \mathcal{X}$,
\begin{equation}
    \label{eq:totally-invariant-kernel}
    k(\sigma(x), \tau(y)) = k(x,y),
\end{equation}
as introduced in \cite{haasdonk2007invariant}.
Note that both inner product kernels $k(x,y) = \kappa\left(\langle x,y\rangle\right)$ and stationary kernels $k(x,y) = \kappa(\|x-y\|)$ satisfy simultaneous invariance, as $G$ consists of isometries (which preserve distances and inner products). 

Next, we outline the relationship between the invariance of the kernel and the invariance of the functions in the associated RKHS.
In particular, the following proposition (\cref{proposition:symmetrisation}) identifies the space of invariant functions as a subspace of the RKHS of a simultaneously invariant kernel.
Moreover, it turns out that this subspace is an RKHS in its own right, with a corresponding kernel that is \textit{totally invariant}.
For a proof of these properties, see \cref{sec:symmetrisation_proof}.

\begin{proposition}[RKHS of invariant functions]
\label{proposition:symmetrisation}
Let $\mathcal X $ and $G$ be as defined above.
Consider a positive definite and simultaneously invariant kernel $k : \mathcal{X} \times \mathcal{X} \to \mathbb{R}$.
The kernel defines a reproducing kernel Hilbert space $\mathcal{H}_k$, whose elements are functions $f : \mathcal{X} \to \mathbb{R}$, and a corresponding Hilbert space norm $\| \cdot \|_{\mathcal{H}_k}$.
Define the symmetrization operator $S_G: \mathcal{H}_k \to \mathcal{H}_k$, such that
\begin{equation}
\label{eq:symmetrisation}
    S_G(f) = \frac{1}{|G|}\sum_{\sigma \in G} f \circ \sigma.
\end{equation}
Then, $S_G$ is a self-adjoint projection operator, whose image $\image[S_G]$ is the subspace of $\mathcal{H}_k$ that contains $G$-invariant functions.
Moreoever, $\image[S_G]$ is an RKHS in its own right, $\mathcal H_{k_G}$.
For $x, y \in \mathcal{X}$, the reproducing kernel of $\mathcal H_{k_G}$ is
\begin{align}
\label{eq:symmetric_kernel}
    k_G(x, y) = \frac{1}{|G|} \sum_{\sigma \in G} k(\sigma(x), y).
\end{align}
\end{proposition}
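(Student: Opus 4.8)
The plan is to realise the symmetrisation operator $S_G$ as the average of the unitary operators that the group elements induce on $\mathcal{H}_k$, to deduce from this that $S_G$ is an orthogonal projection onto the invariant subspace, and then to apply the general principle that the range of an orthogonal projection on an RKHS is itself an RKHS with an explicitly computable kernel.

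First I would show that each $\sigma \in G$ induces a well-defined isometry $T_\sigma \colon \mathcal{H}_k \to \mathcal{H}_k$ given by $T_\sigma f = f \circ \sigma$. The cleanest route is to define $T_\sigma$ first on the dense subspace spanned by the kernel sections $k(\cdot, x)$, $x \in \mathcal{X}$. Simultaneous invariance \eqref{eq:simultaneously-invariant-kernel} rewrites $k(\sigma(y), x) = k(y, \sigma^{-1}(x))$, so $T_\sigma k(\cdot, x) = k(\cdot, \sigma^{-1}(x)) \in \mathcal{H}_k$; and the reproducing property together with $k(\sigma^{-1}(x), \sigma^{-1}(y)) = k(x, y)$ shows that $T_\sigma$ preserves inner products of kernel sections, hence (by linearity and continuity) extends to a unitary operator on $\mathcal{H}_k$ with $T_\sigma^{-1} = T_{\sigma^{-1}} = T_\sigma^{*}$ and composition law $T_\sigma T_\tau = T_{\tau \circ \sigma}$. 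I expect this to be the \textbf{main obstacle}: it is the only place where the hypotheses (simultaneous invariance, $G$ acting by isometries) genuinely enter, and one must be careful that $f \circ \sigma$ lands back inside $\mathcal{H}_k$ rather than merely in $L^2(\mathcal{X})$ or $C(\mathcal{X})$.

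Given the operators $T_\sigma$, the remainder is group bookkeeping. Writing $S_G = \tfrac{1}{|G|}\sum_{\sigma \in G} T_\sigma$, reindexing by $\sigma \mapsto \sigma^{-1}$ and using $T_\sigma^{*} = T_{\sigma^{-1}}$ gives $S_G^{*} = S_G$; expanding $S_G^2 = \tfrac{1}{|G|^2}\sum_{\sigma, \tau} T_{\tau \circ \sigma}$ and using that $\tau \mapsto \tau \circ \sigma$ is a bijection of $G$ for each fixed $\sigma$ gives $S_G^2 = S_G$. Hence $S_G$ is an orthogonal projection, and in particular $\image[S_G]$ is closed. For the same bijection reason, $T_\rho S_G = S_G$ for every $\rho \in G$, so any $f \in \image[S_G]$ satisfies $f \circ \rho = f$, i.e. is $G$-invariant; conversely a $G$-invariant $f$ has $T_\sigma f = f$ for all $\sigma$ and hence $S_G f = f$. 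Thus $\image[S_G]$ is precisely the subspace of $G$-invariant elements of $\mathcal{H}_k$.

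Finally I would use the standard fact that if $P$ is the orthogonal projection of an RKHS $\mathcal{H}_k$ onto a closed subspace $\mathcal{H}_0$, then $\mathcal{H}_0$ with the inherited inner product is an RKHS whose reproducing kernel is $k_0(x, y) = (P\, k(\cdot, y))(x)$: indeed $k_0(\cdot, y) = P\, k(\cdot, y) \in \mathcal{H}_0$, and for $f \in \mathcal{H}_0$ self-adjointness of $P$ and the reproducing property in $\mathcal{H}_k$ give $\langle f, k_0(\cdot, y) \rangle = \langle P f, k(\cdot, y) \rangle = f(y)$. (This is short enough to include in full, or it can be cited.) Taking $P = S_G$ and $\mathcal{H}_0 = \image[S_G]$, which we rename $\mathcal{H}_{k_G}$, yields $k_G(x, y) = (S_G k(\cdot, y))(x) = \tfrac{1}{|G|}\sum_{\sigma \in G} k(\sigma(x), y)$, which is \eqref{eq:symmetric_kernel}; a further reindexing of this sum (using simultaneous invariance of $k$ once more) shows that $k_G$ is in fact totally invariant, completing the proof.
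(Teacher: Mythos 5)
Your proposal is correct and takes essentially the same route as the paper's own proof: simultaneous invariance makes precomposition with each $\sigma$ a norm-preserving map of $\mathcal{H}_k$ (checked on kernel sections and extended by density/continuity), $S_G$ is then a self-adjoint idempotent whose image is exactly the $G$-invariant subspace, and the reproducing element of that subspace is $S_G k(\cdot,y)$, yielding the kernel in \eqref{eq:symmetric_kernel}. Packaging this via the unitaries $T_\sigma$ and the general projection-of-an-RKHS lemma is only an organizational variant of the paper's inline computations.
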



\begin{SCfigure}[][t]
\vspace{-3ex}
    \includegraphics[width=0.55\textwidth]{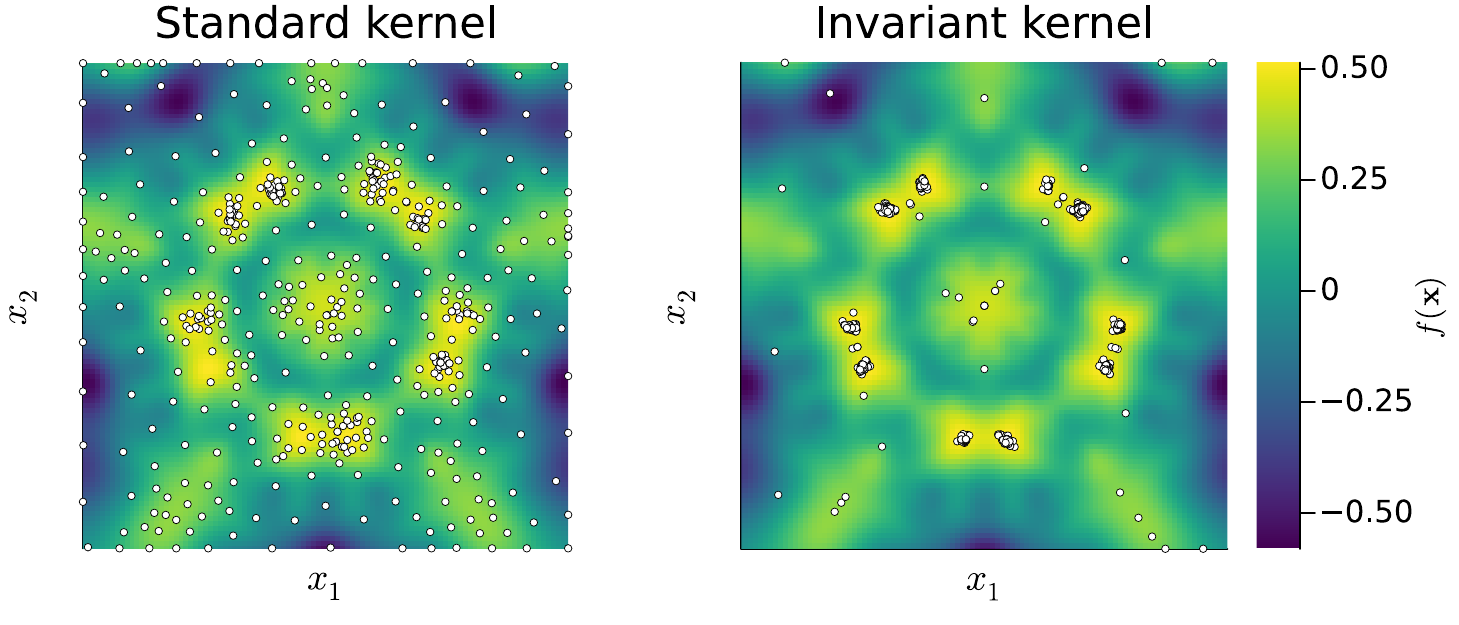}
    \hfill
    \caption{
    UCB run with standard $k$ and invariant kernel $k_G$ applied to an invariant objective; queried points in white. UCB with the invariant kernel requires significantly fewer samples to find the optimum.
    \label{fig:sample-efficiency}}  
\end{SCfigure}

We consider a kernelised optimisation setup with bandit feedback, in which the goal is to find the maximum of an unknown function $f: \mathcal{X} \to \mathbb{R}$ that is invariant to a known group of transformations $G$.
We model $f$ as belonging to the invariant RKHS $\mathcal{H}_{k_G}$ from \cref{proposition:symmetrisation}.
Without further smoothness assumptions, this task is intractable; similarly to the common non-invariant setting, we make the regularity assumption that $f$ has bounded RKHS norm, $\|f\|_{\mathcal{H}_{k_G}} < B$.
The learner is given $k$ and $G$.
In Section \ref{eq:upper_bounds}, we explore setting \( k \) in \( k_G \) to common kernels, such as those from the Matérn family.\looseness=-1

At each round $t$, the learner selects a point $x_t \in \mathcal{X}$ and receives a noisy function observation
\begin{align}
    y_t = f(x_t) + \eta_t,
    \label{eq:noisy_bandit}
\end{align}
where $\eta_t$ is drawn independently from a $\sigma_n$-sub-Gaussian noise distribution.
The agent reports a point $x_t^*$ and incurs a corresponding \textit{simple regret}, defined as 
\begin{align}
    r_t = \argmax_{x \in \mathcal{X}} f(x) - f(x_t^*).
\end{align}
After $T$ rounds, the \textit{cumulative regret} is given by
\begin{align}
    R_T = \sum_{t=1}^T \left(\argmax_{x \in \mathcal{X}} f(x) - f(x_t^*)\right).
\end{align}
In the next section, we quantify the impact of invariance on the number of function samples required to achieve a given regret, known as the \textit{sample complexity}.

\vspace{-1ex}
\section{Sample complexity bounds for invariance-aware Bayesian optimisation}
\vspace{-1ex}
The bandit problem outlined in the previous section can be tackled using \emph{Bayesian optimisation} (BO) \cite{garnett2023bayesian}.
Intuitively, we expect BO algorithms that utilize the $G$-invariance of the target function to achieve improvements in sample complexity, as each observed $x$ simultaneously provides information about additional transformed locations $\sigma(x)$ for each $\sigma \in G$ (see \cref{fig:example_invariant_functions}). 

In this section, we quantify the performance improvements by deriving bounds on the sample complexity of BO with totally invariant kernels (\cref{eq:totally-invariant-kernel}).
We begin with an outline of the BO framework.\looseness=-1

\vspace{-1ex}
\subsection{Bayesian optimisation of invariant functions}
\vspace{-1ex}
In Bayesian optimisation, predictions from a Gaussian process (GP) are used in conjunction with an acquisition function to select points to query \cite{garnett2023bayesian}.
After collecting a sequence of input-observation pairs, $\mathcal{D} = \{(x_1, y_1), \dots, (x_t, y_t)\}$, the learner models the invariant target function $f \in \mathcal{H}_{k_G}$ using a GP with a totally invariant kernel, $\mathcal{GP}(0, k_G)$.
The posterior predictive distribution of the function value at a specified input location $x$ will be Gaussian with mean and variance given by
\begin{align}
\mu_t(x) &= \boldsymbol{k}_G^T (\boldsymbol{K}_G + \lambda \boldsymbol{I})^{-1} \boldsymbol{y} \label{eq:gp_mean} \\
\sigma_t^2(x) &= k_G(x, x) - \boldsymbol{k}_G^T(\boldsymbol{K}_G + \lambda \boldsymbol{I})^{-1} \boldsymbol{k}_G, \label{eq:gp_var}
\end{align}
where $\boldsymbol{k}_G = [k_G(x, x_1), \dots, k_G(x, x_t)]^T \in \mathbb{R}^{t \times 1}$, $\boldsymbol{K}_G$ is the kernel matrix with elements $[\boldsymbol{K}_G]_{i,j} = k_G(x_i, x_j)$ $\forall i, j \in 1, \dots, t$, $\boldsymbol{y} = [y_1, \dots, y_t]^T$, and $\lambda$ is a regularising term.
Note that as this GP has a totally $G$-invariant kernel, $k_G$, it can be interpreted as a distribution over $G$-invariant functions.

We consider two common acquisition functions.
The tightest simple regret bounds in the literature are for the \textit{Maximum Variance Reduction} algorithm (MVR), in which the learner iteratively queries the points with highest uncertainty before reporting the point with highest posterior mean \cite{vakili2021optimal}.
Another algorithm that has been widely studied is \textit{Upper Confidence Bound} (UCB), which balances exploration and exploitation by behaving optimistically in the face of uncertainty \cite{srinivas2010gaussian}. 
Pseudocode for MVR and UCB is provided in \cref{alg:mvr} and \cref{alg:ucb}, respectively.
Our analysis is framed in terms of MVR and simple regret, although in \cref{sec:synthetic experiments} we also investigate empirically the cumulative regret performance of UCB.

\begin{figure}
\begin{minipage}{0.43\textwidth}
\begin{algorithm}[H]
    \centering
    \caption{Max.~variance reduction \cite{vakili2021optimal}}\label{alg:mvr}
    \begin{algorithmic}[1]
        \Statex \textbf{Input:} $k_G$, $\mathcal{X}$
        \State \textbf{initialize} $\mathcal{D} = \{\}$
        \For{$t = 1, \dots, T$}
            \State Select $x_t = \argmax_{x \in \mathcal{X}}\sigma_{t-1}(x)$
            \State Observe $y_t = f(x_t) + \eta_t$
            \State Append $(x_t, y_t)$ to $\mathcal{D}$
            \State Update $\mu_t, \sigma_t$ from Eqs. \ref{eq:gp_mean}, \ref{eq:gp_var}
        \EndFor \\
        \Return $\hat{x}_\mathrm{opt}^\mathrm{MVR} = \argmax_{x \in \mathcal{X}} \mu_T(x)$
    \end{algorithmic}
\end{algorithm}
\end{minipage}
\hfill
\begin{minipage}{0.54\textwidth}
\begin{algorithm}[H]
    \centering
    \caption{Upper confidence bound \cite{srinivas2010gaussian}}\label{alg:ucb}
    \begin{algorithmic}[1]
        \Statex \textbf{Input:} $k_G$, $\mathcal{X}$
        \State \textbf{initialize} $\mathcal{D} = \{\}$
        \For{$t = 1, \dots, T$}
            \State Select $x_t = \argmax_{x \in \mathcal{X}} \mu_{t-1}(x) + \beta \sigma_{t-1}(x)$
            \State Observe $y_t = f(x_t) + \eta_t$
            \State Append $(x_t, y_t)$ to $\mathcal{D}$
            \State Update $\mu_t, \sigma_t$ from Eqs. \ref{eq:gp_mean}, \ref{eq:gp_var}
        \EndFor \\
        \Return $\hat{x}_\mathrm{opt}^\mathrm{UCB} = x_T$
    \end{algorithmic}
\end{algorithm}
\end{minipage}
\end{figure}

A popular family of kernels used in BO is the Mat\'ern kernel.
On $\mathbb{R}^d$, the Mat\'ern kernel with parameters $l, \nu$ is given by
\begin{align}
    k(x, x') = \frac{2^{1 - \nu}}{\Gamma(\nu)} \left(\frac{\sqrt{2 \nu}}{l} \| x - x' \| \right)^\nu K_\nu \left(\frac{\sqrt{2 \nu}}{l} \| x - x' \| \right),
    \label{eq:matern_kernel}
\end{align}
where $K_\nu$ is the modified Bessel function of the second kind.
The parameter $\nu$ controls the differentiability of the corresponding GP, which enables Mat\'ern kernels to define a diverse range of function spaces.
As the Mat\'ern kernel $k$ is simultaneously invariant (\cref{eq:simultaneously-invariant-kernel}), we can construct a totally invariant $k_G$ and corresponding RKHS following \cref{proposition:symmetrisation}.
The bounds that we derive both hold for the Mat\'ern family; our upper bound also extends to \textit{any} kernel that exhibits polynomial eigendecay.\looseness=-1

In \cref{fig:sample-efficiency}, we provide a visual example of the impact of using $k_G$ in Bayesian optimisation. We run UCB on a group-invariant objective with the standard and invariant kernels.
With the non-invariant kernel $k$, the algorithm spends many samples querying suboptimal points with similar objective values.
In contrast, using the invariant kernel $k_G$ allows the algorithm to identify suboptimal regions across the parameter space with only a handful of samples, resulting in significantly faster convergence.

\vspace{-1ex}
\subsection{Upper bounds on sample complexity}
\vspace{-1ex}
\label{eq:upper_bounds}
In this section, let $\mathcal X = \mathbb S^{d-1} \subset \mathbb R^d$ and let $G$ be a finite subgroup of the orthogonal group, $O(d)$, equipped with its natural action on $\mathcal X$. 
Let $k$ be a simultaneously invariant kernel on $\mathbb S^{d-1}$, and assume that the eigenvalues of the associated integral operator decay at a polynomial rate $\mu_k = \mathcal O(k^{-\beta_p^*})$, for some constant $\beta_p^* > 1$.
Many common kernels exhibit this property, such as the standard Mat\'ern kernel on $\mathbb{R}^d$. 
For GPs defined intrinsically on Riemannian manifolds \cite{whittle1963stochastic}, the corresponding Mat\'ern kernel also exhibits polynomial eigendecay \cite{borovitskiy2020matern}.
We impose the following assumption on the spectra of elements in the group $G$, which is in line with the assumptions and spectral decay rates in \cite{bietti2021sample}.

\begin{assumption}\label{assum:spectral_prop}
    All elements $\sigma \in G$ satisfy the following spectral property:
    \begin{align}
    \label{eq:spectral_prop}
        m_\lambda < d - 4 \quad if \quad \lambda \neq \pm 1,
    \end{align}
    for all $\lambda \in \operatorname{Spec}(\sigma)$, where $m_\lambda$ is the multiplicity of $\lambda$.
\end{assumption}

This assumption excludes certain groups in low dimensions, but we expect it can be relaxed through tighter analysis.
In fact, our experiments for $d=2,3$ show it is essential as a theoretical artefact only. Nevertheless, \cref{eq:spectral_prop} holds for all finite orthogonal groups in dimension $d \geq 10$ (see Lemma \ref{lemma:asymp-of-gamma}) as well as for many groups of lower dimension; for example, groups for which $m_\lambda = 1$, so that only distinct eigenvalues are present in the spectra.
Under this assumption, we have the following upper bound on sample complexity.

\begin{theorem}[Upper bound on sample complexity of invariance-aware BO]
\label{thm:upper_bound}
Let $k$ be a polynomial eigendecay kernel on $\mathbb S^{d-1}$ which is simultaneously invariant w.r.t.\ the orthogonal group $O(d)$, and let $G \leq O(d)$ be a subgroup of $O(d)$ satisfying Assumption \ref{assum:spectral_prop}. 
Let $k_G$ be the totally invariant kernel, as defined in Proposition \ref{proposition:symmetrisation}. Consider noisy bandit optimisation of $f \in \mathcal{H}_{k_G}$, where $\|f\|_{\mathcal{H}_{k_G}} < B$ and $f : \mathbb{S}^{d-1} \to \mathbb{R}$.
Then, the maximum information gain (MIG) for the totally invariant kernel after $T$ rounds, $\gamma^G_T$, is bounded by
\begin{align}
    \label{eq:MIG}
    \gamma_T^G = \tilde{\mathcal{O}}\Big( \tfrac{1}{|G|} T^{\frac{d-1}{\beta_p^*}}\Big),
\end{align}
where $\tilde{\mathcal{O}}(\cdot)$ hides logarithmic factors.

Moreover, for the MVR algorithm with invariant kernel $k_G$ to achieve expected simple regret $\mathbb{E}[r_T] < \epsilon$, it must hold that
\begin{align}
    T = \tilde{\mathcal{O}}\left(
\left(
    \tfrac{1}{|G|}
\right)^\frac{\beta_p^*}{\beta_p^* - d + 1}
\epsilon^{-\frac{2\beta_p^*}{\beta_p^* - d + 1}}\right).
\end{align}
For the Mat\'ern kernel, this corresponds to
\begin{align}
\label{eq:lower_bound_theorem_matern}
    T = \tilde{\mathcal{O}}\left(
\left(
    \tfrac{1}{|G|}
\right)^\frac{2\nu + d -1}{2 \nu}
\epsilon^{-\frac{2\nu + d -1}{\nu}}
\right).
\end{align}
\end{theorem}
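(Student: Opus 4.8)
The plan is to prove the three claims in sequence, deriving the MIG bound first and then feeding it into the standard MVR simple-regret machinery. The key object is the spectrum of the integral operator associated with the totally invariant kernel $k_G$. Since $k$ is simultaneously invariant under $O(d)$, its integral operator on $L^2(\mathbb S^{d-1})$ is diagonalised by the spherical harmonics, with eigenvalue $\mu_\ell$ on the degree-$\ell$ harmonic space $V_\ell$ (of dimension $N_{d,\ell} = \Theta(\ell^{d-2})$), and by assumption $\mu_\ell = \mathcal O(\ell^{-\beta_p^*})$ after re-indexing — equivalently $\mu_k = \mathcal O(k^{-\beta_p^*})$ in the flat enumeration. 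Passing from $k$ to $k_G$ via \cref{eq:symmetric_kernel} amounts to composing the integral operator with the averaging projection $S_G$; the nonzero eigenvalues of $k_G$ are therefore the $\mu_\ell$, but each with reduced multiplicity equal to $\dim(V_\ell^G)$, the dimension of the $G$-invariant subspace of $V_\ell$. The crux is to show that, in an averaged sense over $\ell$, this invariant dimension is a factor $\approx 1/|G|$ smaller: $\sum_{\ell \le L} \dim(V_\ell^G) = \tfrac{1}{|G|}\sum_{\ell \le L} N_{d,\ell}\,(1 + o(1))$. This is where \cref{assum:spectral_prop} enters. By Molien/Burnside, $\dim(V_\ell^G) = \tfrac{1}{|G|}\sum_{\sigma\in G}\chi_\ell(\sigma)$, where $\chi_\ell(\sigma)$ is the character of the $O(d)$-representation $V_\ell$ evaluated at $\sigma$, a Gegenbauer-type polynomial in the eigenvalues of $\sigma$. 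The identity element contributes $N_{d,\ell}/|G| = \Theta(\ell^{d-2})/|G|$; the content of the spectral assumption $m_\lambda < d-4$ for $\lambda \ne \pm 1$ is precisely that every non-identity $\sigma$ contributes $o(\ell^{d-2})$ — this should follow from the asymptotics of $\chi_\ell(\sigma)$, whose growth rate in $\ell$ is governed by the multiplicities of the eigenvalues $\pm 1$ of $\sigma$ relative to $d$ (cf. \cref{lemma:asymp-of-gamma} and the estimates in \cite{bietti2021sample}). Summing the geometric-like tail then yields the effective eigenvalue enumeration for $k_G$, shifted by $|G|$.

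With the eigendecay of $k_G$ in hand, the MIG bound \cref{eq:MIG} follows from the standard spectral bound on $\gamma_T$ (as in \cite{vakili2021information, srinivas2010gaussian}): if the $m$-th eigenvalue of $k_G$ is $\tilde\mathcal O\big((|G| m)^{-\beta_p^*/(d-1)}\big)$ — the exponent being $\beta_p^*/(d-1)$ because $k_G$ has $\Theta(L^{d-1}/|G|)$ eigenvalues at least as large as $\mu_L \sim L^{-\beta_p^*}$ — then optimising the usual split of $\gamma_T$ into a head term ($\sum_{m\le m^*}\log(1+\sigma_n^{-2}\mu_m T)$) and a tail term ($\sigma_n^{-2}T\sum_{m>m^*}\mu_m$) over the cutoff $m^*$ gives $\gamma_T^G = \tilde\mathcal O\big(\tfrac{1}{|G|}T^{(d-1)/\beta_p^*}\big)$. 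Here the $1/|G|$ survives as a prefactor because it multiplies the number of ``effective'' dimensions while leaving the decay exponent untouched; I would track the constants carefully through the head/tail optimisation to make sure the $1/|G|$ does not get absorbed into logarithmic factors.

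For the regret statement, I would invoke the MVR simple-regret guarantee (\cite{vakili2021optimal}): after $T$ rounds, $\mathbb E[r_T] = \mathcal O\big((B + \sigma_n)\sqrt{\gamma_T^G/T}\big)$, valid since $f \in \mathcal H_{k_G}$ with $\|f\|_{\mathcal H_{k_G}} < B$ and $k_G$ is a bona fide reproducing kernel by \cref{proposition:symmetrisation}. Substituting $\gamma_T^G = \tilde\mathcal O\big(|G|^{-1} T^{(d-1)/\beta_p^*}\big)$ gives $\mathbb E[r_T] = \tilde\mathcal O\big(|G|^{-1/2} T^{-(\beta_p^* - d + 1)/(2\beta_p^*)}\big)$; setting this $< \epsilon$ and solving for $T$ yields exactly $T = \tilde\mathcal O\big(|G|^{-\beta_p^*/(\beta_p^* - d + 1)}\,\epsilon^{-2\beta_p^*/(\beta_p^* - d + 1)}\big)$, and the Matérn specialisation is the substitution $\beta_p^* = \tfrac{2\nu + d - 1}{d - 1}$ (so that $\beta_p^* - d + 1 = \tfrac{2\nu}{d-1}$ and $\beta_p^*/(\beta_p^* - d+1) = \tfrac{2\nu + d - 1}{2\nu}$), which is the known eigendecay rate for the Matérn kernel on $\mathbb S^{d-1}$ from \cite{borovitskiy2020matern}. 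The main obstacle is the first step: establishing the $o(\ell^{d-2})$ bound on the non-identity character contributions uniformly over $\sigma \in G$ under \cref{assum:spectral_prop}, and then confirming that the averaged invariant dimension genuinely behaves like $N_{d,\ell}/|G|$ rather than merely being bounded above by it — a lower bound of the same order is needed to get a matching eigen-enumeration and hence a tight (not just upper) $1/|G|$ factor in the MIG. Everything downstream is a mechanical substitution into existing regret bounds.
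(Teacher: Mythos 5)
Your proposal follows essentially the same route as the paper's proof: Mercer decomposition of the $O(d)$-invariant (dot-product) kernel in spherical harmonics, a character/Gegenbauer average showing $\dim(V_\ell^G)\approx N_{d,\ell}/|G|$ with Assumption~\ref{assum:spectral_prop} controlling the non-identity terms (this is exactly Lemma~\ref{lemma:asymp-of-gamma}, following Bietti et al.), the projection-based MIG bound of Vakili et al.\ applied to the reduced eigenvalue counts, and finally the MVR simple-regret guarantee solved for $T$ with the Matérn specialisation from Borovitskiy et al. The only slip is in the last step: in the paper's convention $\beta_p^*$ is the decay rate of the \emph{distinct} (degree-indexed) eigenvalues, so the Matérn value is $\beta_p^* = 2\nu+d-1$ rather than $(2\nu+d-1)/(d-1)$ (the latter is the flat-enumeration rate); your substitution mixes the two conventions, although the final exponents you report are nevertheless the correct ones.
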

\begin{proof}
    See \cref{sec:upper_bound_proof}.
\end{proof}

Our upper bound demonstrates that incorporating the group invariance into the kernel is guaranteed to result in sample efficiency improvements that increase with the size of the group, due to the $\frac{1}{|G|}$ factor.
Our result for the MIG $\gamma$ of invariant kernels is of independent interest, as it is applicable to a wide range of scenarios.
Notably, the linear $\tfrac{1}{|G|}$ improvement in the information gain from \cref{eq:MIG} resembles the linear $\tfrac{1}{|G|}$ improvement in generalisation error in the invariant kernel ridge regression setting \cite{bietti2021deep}.
Intuitively, this reflects how the MIG criterion is determined solely by the ability of the kernel to approximate the objective function, and is agnostic to the algorithm used in optimisation.


In our proof, we begin with kernels on $\mathbb S^{d-1}$ that are simultaneously invariant w.r.t. the full orthogonal group $O(d)$, and use \cref{proposition:symmetrisation} to construct kernels that totally invariant kernels w.r.t finite subgroups $G \leq O(d)$.
Simultaneously $O(d)$-invariant kernels on $\mathbb S^{d-1}$ are in fact dot-product kernels, which allows us to use standard versions of Mercer's theorem in terms of known spectral decompositions. 
If an alternative decomposition is known for a particular $k$, then it is sufficient that $k$ instead satisfies the weaker condition of simultaneous invariance w.r.t.\ $G$ only.


\vspace{-1ex}
\subsection{Lower bounds on sample complexity}
\vspace{-1ex}
In the next theorem, we present a lower bound on sample complexity which nearly matches the upper bound in its dependence on $|G|$. 

\begin{theorem}[Lower bound on sample complexity of invariance-aware BO with Mat\'ern kernels]
\label{thm:lower_bound}
Let $\mathcal X$ be one of $[0,1]^d$ or $\mathbb S^d$.
Let $k$ be the standard Mat\'ern kernel on $\mathbb R^d$ with smoothness $\nu$, and $G$ be a finite group of isometries acting on $\mathcal X$.
Let $k_G$ be the totally invariant kernel constructed from $k$ and $G$ according to \cref{proposition:symmetrisation}.
Consider noisy bandit optimisation of $f \in \mathcal{H}_{k_G}$, $f : \mathcal X \to \mathbb{R}$, where the observations are corrupted by $\sigma$-sub-Gaussian noise.
Then, for sufficiently small $\frac{\epsilon}{B}$,there exists a function family $\bar{\mathcal F}_{\mathcal X}(\epsilon, B)$ such that the number of samples required for an algorithm to achieve simple regret less than $\epsilon$ for all functions in $\bar{\mathcal F}_{\mathcal X}$  with probability at least $1-\delta$ is bounded by:
\begin{align}
\label{eq:upper_bound_theorem_matern}
    T = \Omega\left(
    \left(\tfrac{1}{|G|}\right)^\frac{\nu + d-1}{\nu}
    \epsilon^{-\frac{2\nu+d-1}{\nu}} \sigma^2 B^\frac{d-1}{\nu} \log \frac{1}{\delta}
    \right),
\end{align}
\end{theorem}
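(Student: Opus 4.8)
The plan is to prove the lower bound by a standard reduction to a multiple-hypothesis-testing (or ``needle-in-a-haystack'') argument, adapted to the invariant setting. First I would construct the function family $\bar{\mathcal F}_{\mathcal X}(\epsilon, B)$ explicitly: take a smooth, compactly supported bump $g$ (a scaled Mat\'ern-RKHS ``spike'') of height $\sim\epsilon$ and width chosen so that $\|g\|_{\mathcal H_k}$ is controlled, then \emph{symmetrise} it via the operator $S_G$ from Proposition~\ref{proposition:symmetrisation} to obtain a $G$-invariant function $g_G = S_G(g) \in \mathcal H_{k_G}$, and finally place translated/rotated copies of this symmetrised bump at a maximal set of well-separated base points on a fundamental domain $\mathcal X / G$. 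The key quantitative point is that, since the fundamental domain has volume $\sim |\mathcal X|/|G|$, we can only pack $M \sim |G|^{-1} \epsilon^{-(d-1)/\nu}$-many mutually disjoint bumps of the relevant width into it (as opposed to $\epsilon^{-(d-1)/\nu}$ without symmetry), and each placement yields a distinct hard instance whose maximiser the algorithm must localise to within the bump's support to achieve simple regret $<\epsilon$.

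The second ingredient is an information-theoretic lower bound on the number of queries needed to distinguish these $M$ hypotheses. Here I would invoke the techniques of \cite{scarlett2017lower, cai2021lower}: because the bumps have disjoint supports, any single query $x_t$ gives Kullback--Leibler divergence at most $O(\epsilon^2/\sigma^2)$ between the observation law under the null (flat) instance and under the instance centred near $x_t$, and zero information about the other $M-1$ instances. A Fano-type or change-of-measure argument then forces $T \gtrsim \sigma^2 \epsilon^{-2} \log M$ queries just to identify which instance is active with constant probability, and refining the argument to probability $1-\delta$ contributes the $\log(1/\delta)$ factor. Substituting $\log M \sim \log\big(|G|^{-1}\epsilon^{-(d-1)/\nu}\big)$ --- or, more carefully, tracking the $|G|$-dependence through the packing count rather than only through the logarithm --- produces the stated exponent $(\nu + d - 1)/\nu$ on $1/|G|$ and the $\epsilon^{-(2\nu + d - 1)/\nu}$ rate, with the $B^{(d-1)/\nu}$ factor emerging from the relation between the bump width, its height $\epsilon$, and the RKHS-norm budget $B$.

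The main obstacle I anticipate is controlling the invariant RKHS norm $\|g_G\|_{\mathcal H_{k_G}}$ of the symmetrised bumps and, relatedly, ensuring the construction is valid on the sphere $\mathbb S^d$ (not just the flat torus/cube where bump constructions are routine). Since $S_G$ is a norm-non-increasing projection, $\|g_G\|_{\mathcal H_{k_G}} \le \|g\|_{\mathcal H_k}$, but I need a matching \emph{lower} control and, more importantly, I must guarantee that the $G$-translates in the sum defining $S_G(g)$ have disjoint supports so that the symmetrised bump is itself a genuine localised spike of height $\Theta(\epsilon/|G|)$ or $\Theta(\epsilon)$ with the right width --- this requires the base point to lie in the interior of a fundamental domain, away from the fixed-point locus of $G$, and a careful accounting of how the orbit structure interacts with the bump radius. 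On the curved domain, I would transfer the Euclidean Mat\'ern RKHS estimates locally via a bi-Lipschitz chart (the isometries in $G$ act compatibly, so the local picture is preserved), which is the ``blueprint for other Riemannian manifolds'' alluded to in the contributions; making the distortion constants explicit and independent of the packing is the delicate part. Once these norm and disjointness bookkeeping steps are in place, the testing lower bound is essentially the standard argument applied with the $|G|$-reduced packing number.
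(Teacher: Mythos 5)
Your overall skeleton (symmetrised bumps with pairwise disjoint supports, a packing count, and a change-of-measure argument in the style of \cite{scarlett2017lower,cai2021lower}) is the same as the paper's, but two quantitative steps in your proposal would not deliver the stated bound. First, the information-theoretic step: you invoke a Fano-type bound of the form $T \gtrsim \sigma^2\epsilon^{-2}\log M$. That cannot produce the rate $\epsilon^{-\frac{2\nu+d-1}{\nu}}$ --- with $M \sim |G|^{-1}\epsilon^{-(d-1)/\nu}$ it only gives $\tilde{\mathcal O}(\sigma^2\epsilon^{-2})$. The correct dependence is \emph{linear} in the packing number: because the bumps have disjoint supports, only queries landing in region $\bar{\mathcal R}_j$ carry any KL divergence between the two instances $\bar f = S_G(f_i)$ and $\bar f' = S_G(f_i)+2S_G(f_j)$, so \cite[Lemma 1]{cai2021lower} yields $\mathbb E_{\bar f}[C_j]\gtrsim \sigma^2\epsilon^{-2}\log\frac{1}{2.4\delta}$ \emph{for each} competing region $j$, and summing over the $\approx M$ regions gives $T \gtrsim M\,\sigma^2\epsilon^{-2}\log\frac{1}{\delta}$ (this is how the paper proceeds, cf.\ \cref{eq:relating_two_instances}--\cref{eq:lower_bound_on_T}). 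Your parenthetical ``tracking the $|G|$-dependence through the packing count rather than only through the logarithm'' gestures at this but does not state it, and as written the $\log M$ route fails.

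Second, the exponent $\frac{\nu+d-1}{\nu}$ on $1/|G|$ does not come from packing height-$\epsilon$ bumps into a fundamental domain of volume $\sim 1/|G|$; that alone gives only $|G|^{-1}$. The obstacle you flag (is the symmetrised spike of height $\Theta(\epsilon/|G|)$ or $\Theta(\epsilon)$?) is exactly the crux, and it must be resolved as follows: since $S_G$ averages over $|G|$ translates with disjoint supports, the height drops by the factor $1/|G|$ at points with full orbits, so to obtain $G$-invariant spikes of height $2\epsilon$ one must start from bumps of height $2|G|\epsilon$; keeping the RKHS norm within the budget $B$ (using $\|g\|_k \sim \epsilon\, w^{-\nu}$) then forces the width to inflate by $|G|^{1/\nu}$, which shrinks the packing by an additional factor $|G|^{(d-1)/\nu}$ on top of the $1/|G|$ orbit count --- this is the rescaling $\bar{\mathcal F}(w,\epsilon,G)=\{S_G(f): f\in\mathcal F(|G|^{1/\nu}w,\,|G|\epsilon)\}$ in \cref{lemma:invariant_lattice} and the bound $M \gtrsim N/|G|^{1+d/\nu}$ in \cref{eq:lower_bound_on_M} (with $d-1$ in place of $d$ on the sphere). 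One also needs the generic-orbit accounting (\cref{prop:orbit-strata}) to ensure almost all lattice points have orbit size exactly $|G|$. Your spherical transfer via local charts is in the same spirit as the paper's restriction argument ($\|\phi\|_{\tilde k_\nu}\le\|f\|_{k_\nu}$ plus geodesic-ball packing of a Dirichlet fundamental domain), so that part is fine, but without the two fixes above the proposal does not reach the claimed $\Omega\bigl(|G|^{-\frac{\nu+d-1}{\nu}}\epsilon^{-\frac{2\nu+d-1}{\nu}}\sigma^2 B^{\frac{d-1}{\nu}}\log\frac{1}{\delta}\bigr)$ bound.
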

\begin{proof}
    See \cref{sec:lower_bound_proof}.
\end{proof}

Our lower bound guarantees that algorithms that are run for a number of sample less than $T$ fail to be $\epsilon$-optimal with high probability.
At fixed $T$, there is a trade-off between this probability and the order of the group. 
However, when combined with the upper bound our results imply that as the order of the group increases, the probability that an algorithm is $\epsilon$-optimal after a fixed number of iterations also increases.\looseness=-1

We recover the previously known lower and upper bounds on sample complexity for non-invariant optimisation from \cite{cai2021lower} and \cite{vakili2021optimal} respectively by setting $|G| = 1$.
We also note that there is a gap between our bounds, as the exponent of $\frac{1}{|G|}$ in \cref{eq:lower_bound_theorem_matern} is $1+\frac{d-1}{\nu}$ but is $1+\frac{d-1}{2\nu}$ in \cref{eq:upper_bound_theorem_matern}.
A likely reason for this gap is in the lower bound, which uses a packing argument for the support sets of functions in the RKHS.
While in the absence of the group action, the lower bound and upper bound agree,  it is not clear a priori how to achieve the tightest packing of invariant functions.

To achieve the largest improvement in sample complexity, one would like to choose the largest possible $G$ when constructing $k_G$. However, as we will see in our experimental section, there is a computational tradeoff between the sample complexity and the computation of $S_G$, as in the case of very large groups the sums over the group elements becomes a computational bottleneck.

\vspace{-1ex}
\section{Experiments}
\vspace{-1ex}
We demonstrate the performance gains from incorporating known invariances in a number of synthetic and real-world optimisation tasks.
Previous work has demonstrated the sample efficiency improvements from applying invariance-aware algorithms to regression tasks \cite{bietti2021deep, mei2021learning, behrooz2023exact}; our experiments complement the literature by observing the corresponding improvements in optimisation tasks.
Code for our experiments and implementations of invariant kernels can be found on GitHub\footnote{
\href{https://github.com/theo-brown/invariantkernels}{\texttt{g\kern-0.05em i\kern-0.05em t\kern-0.05em h\kern-0.05em u\kern-0.05em b\kern-0.05em .\kern-0.05em c\kern-0.05em o\kern-0.05em m\kern-0.05em /\kern-0.05em t\kern-0.05em h\kern-0.05em e\kern-0.05em o\kern-0.05em -\kern-0.05em b\kern-0.05em r\kern-0.05em o\kern-0.05em w\kern-0.05em n\kern-0.05em /\kern-0.05em i\kern-0.05em n\kern-0.05em v\kern-0.05em a\kern-0.05em r\kern-0.05em i\kern-0.05em a\kern-0.05em n\kern-0.05em t\kern-0.05em k\kern-0.05em e\kern-0.05em r\kern-0.05em n\kern-0.05em e\kern-0.05em l\kern-0.05em s}
} and \href{https://github.com/theo-brown/bayesopt_with_invariances}{\texttt{g\kern-0.05em i\kern-0.05em t\kern-0.05em h\kern-0.05em u\kern-0.05em b\kern-0.05em .\kern-0.05em c\kern-0.05em o\kern-0.05em m\kern-0.05em /\kern-0.05em t\kern-0.05em h\kern-0.05em e\kern-0.05em o\kern-0.05em -\kern-0.05em b\kern-0.05em r\kern-0.05em o\kern-0.05em w\kern-0.05em n\kern-0.05em /\kern-0.05em b\kern-0.05em a\kern-0.05em y\kern-0.05em e\kern-0.05em s\kern-0.05em o\kern-0.05em p\kern-0.05em t\kern-0.05em \_\kern-0.05em w\kern-0.05em i\kern-0.05em t\kern-0.05em h\kern-0.05em \_\kern-0.05em i\kern-0.05em n\kern-0.05em v\kern-0.05em a\kern-0.05em r\kern-0.05em i\kern-0.05em a\kern-0.05em n\kern-0.05em c\kern-0.05em e\kern-0.05em s}}}.

\vspace{-1ex}
\subsection{Synthetic experiments}
\label{sec:synthetic experiments}
\vspace{-1ex}

For each of our synthetic experiments, our goal is to find the maximum of a function drawn from an RKHS with a group-invariant isotropic Mat\'ern-$\nicefrac{5}{2}$ kernel.
The groups we consider act by permutation of the coordinate axes, and therefore include reflections and discrete rotations.
For \texttt{PermInv-2D} (\cref{fig:permutation_invariant_function}) and \texttt{PermInv-6D}, the kernel of the GP is invariant to the full permutation group which acts on $\mathbb{R}^2$ and $\mathbb{R}^6$ respectively by permuting the coordinates. For \texttt{CyclInv-3D} it is invariant to the cyclic group acting on $\mathbb{R}^3$ by permuting the axes cyclically (\cref{fig:cyclic_invariant_function}). 
We provide the learner with the true kernel and observation noise variance to eliminate the effect of online hyperparameter learning. For more detail on our experiments, see \cref{sec:experimental details}.

In \cref{fig:synthetic-experiments}, we show that the algorithm's performance on invariant functions is improved by using an invariant kernel.
In all cases, the performance of the invariance-aware algorithm significantly outperforms the baselines, converging to the optimum with many fewer samples.
Notably, in the \texttt{PermInv-6D} task, the baseline algorithms fail to find the true optimum, whereas the permutation-invariant versions do so without difficulty.
We also note that the performance improvement increases with increasing dimension and group size, as predicted by our regret bounds.

\begin{figure}[t]
    \centering
    \begin{subfigure}[b]{\textwidth}
        \refstepcounter{subfigure}
        \label{fig:ucb-experiments}
        \rotatebox[origin=c]{90}{
            (\thesubfigure) UCB
        }
        \hfill
        \includegraphics[width=0.32\textwidth, valign=c]{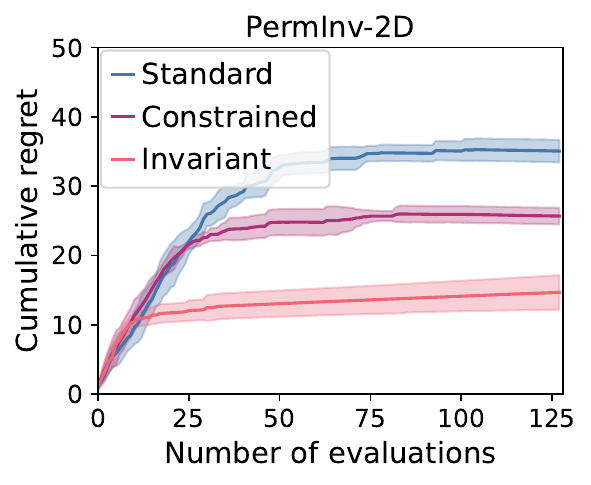}
        \hfill
        \includegraphics[width=0.32\textwidth, valign=c]{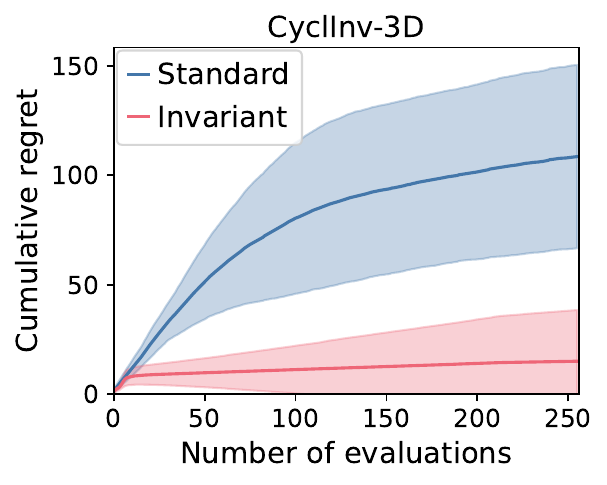}
        \hfill
        \includegraphics[width=0.32\textwidth, valign=c]{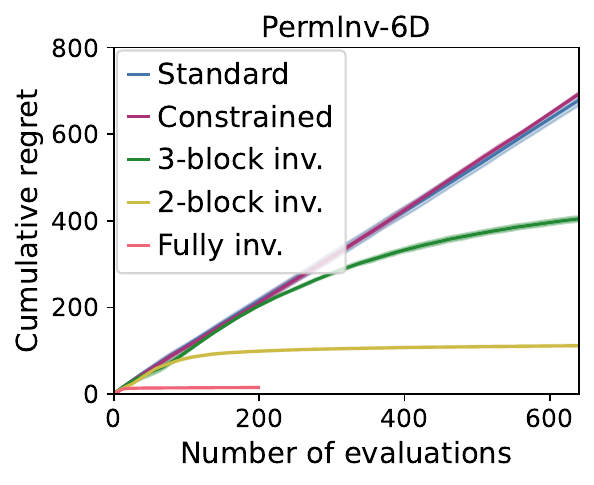}
    \end{subfigure}
    \vspace{1em}
    \begin{subfigure}[b]{\textwidth}
        \refstepcounter{subfigure}
        \label{fig:mvr-experiments}
        \rotatebox[origin=c]{90}{
            (\thesubfigure) MVR
        }
        \hfill
        \includegraphics[width=0.32\textwidth, valign=c]{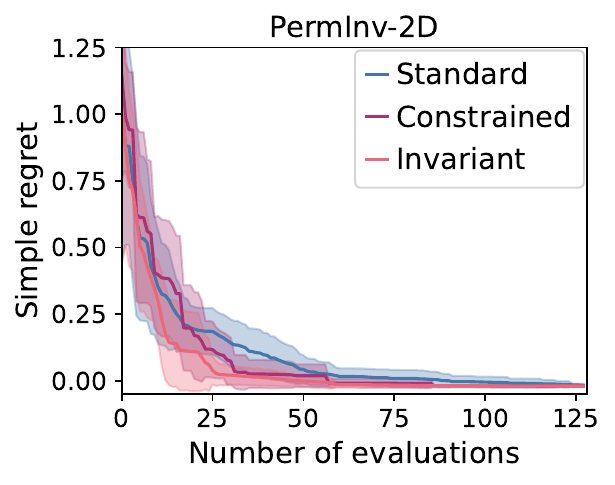}
        \hfill
        \includegraphics[width=0.32\textwidth, valign=c]{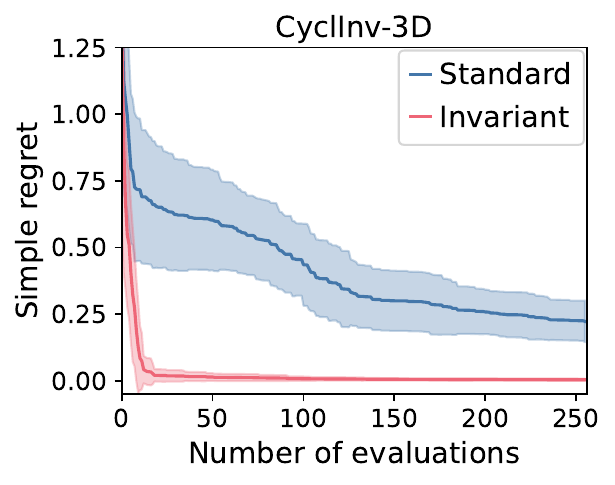}
        \hfill
        \includegraphics[width=0.32\textwidth, valign=c]{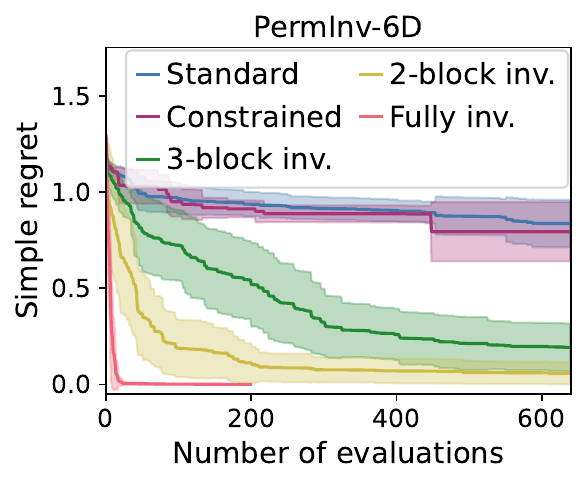}
    \end{subfigure}
    \vspace{-3ex}
    \caption{Regret performance of invariant UCB and MVR algorithms across 3 different tasks; lower is better. Non-invariant kernels (\textbf{\textcolor{tolblue}{blue}}) are outperformed by the full group invariant kernel (\textbf{\textcolor{tolred}{red}}) as well as partially specified (subgroup-invariant) kernels (\textbf{\textcolor{tolgreen}{green}} and \textbf{\textcolor{tolyellow}{yellow}}). For the permutation invariant function, the search space of standard BO can be constrained by the fundamental domain of the group (\textbf{\textcolor{tolpurple}{purple}}), but this performs worse than the invariant kernel. Mean $\pm$ s.d., 32 seeds.}
    \label{fig:synthetic-experiments}
\end{figure}

For the \texttt{PermInv} tasks, we provide an additional baseline: constrained Bayesian optimisation (CBO, \cite{gardner2014bayesian}). In this case, we constrain the search space of the acquisition function optimiser to the fundamental region of the group's action. For arbitrary groups, finding an analytical formula for this region can be difficult, particularly when the dimension of the domain is high and the group is complicated; however, it is straightforward in the case of the full permutation group.
We find that constrained BO performs worse than our method, and in high dimensions is identical to standard (non-invariant) BO.
Intuitively, this matches our expectations, as in CBO the kernel is not aware of all of the structure that the RKHS elements possess -- that is, while the \textit{domain of exploration} is restricted, the \textit{function class} is not.
Observations in CBO contribute no information across the boundaries of the region, unlike in the invariant kernel case, and so the performance improvement of CBO is upper bounded by $\frac{1}{|G|}$.

For the \texttt{PermInv-6D} task, we also consider the case where the full invariance is not fully known to the learner; that is, the kernel is not totally invariant w.r.t. the full group, but only a \textit{subgroup}.
Our results show that incorporating any amount of the true invariance results in performance improvements.
In this case, the true function is invariant to the full permutation group in $6$D ($720$ elements).
We evaluate the performance of the full invariant kernel, the non-invariant kernel, and two subgroup invariant kernels.
The subgroups we consider consist of `block' permutations, which involve reordering the coordinate indices in chunks of length $2$ and $3$ (leading to groups with $6$ and $2$ elements respectively).
For both algorithms, we see that all kernels that incorporate invariance outperform the non-invariant kernel, even when the difference in the group size considered is large.

\vspace{-1ex}
\subsection{Extension: quasi-invariance using additive kernels}
\vspace{-1ex}

In real-world scenarios, it is possible that the underlying function only exhibits \textit{approximate} symmetry.
For example, in the fusion application from \cref{sec:nuclear fusion application}, the launchers may have slightly different properties and are no longer interchangeable.
A 2D example of this `quasi-invariance' is given in the first column of \cref{fig:quasi-invariance}. Although the large-scale features (such as the layout of the peaks) remain approximately symmetric, invariance is not maintained in the detail (the values and shapes of the peaks).\looseness=-1

We model quasi-invariance by introducing an additive non-invariant disturbance to the target function. The function can then be considered as belonging to the RKHS $\mathcal{H}_{(1 - \varepsilon)k_G + \varepsilon k'}$, where $k_G$ is a $G$-invariant kernel, $k'$ is a non-invariant kernel, and $\varepsilon$ sets the degree of deviation from invariance.
In our experiments, we observe that performing BO with the fully invariant kernel still results in significant performance improvements over the non-invariant kernel when $\varepsilon$ is small, achieving performance comparable to BO with the kernel of the function's RKHS. 
As this is a kind of model misspecification, we refer the reader to \cite{bogunovic2021misspecified} for a theoretical discussion on the performance of BO with misspecified kernels.\looseness=-1

\begin{figure}[t]
    \centering
    \begin{subfigure}[b]{\textwidth}
        \refstepcounter{subfigure}
        \rotatebox[origin=c]{90}{
            (\thesubfigure) $\varepsilon = 0.01$
        }
        \includegraphics[width=0.3\textwidth, valign=c]{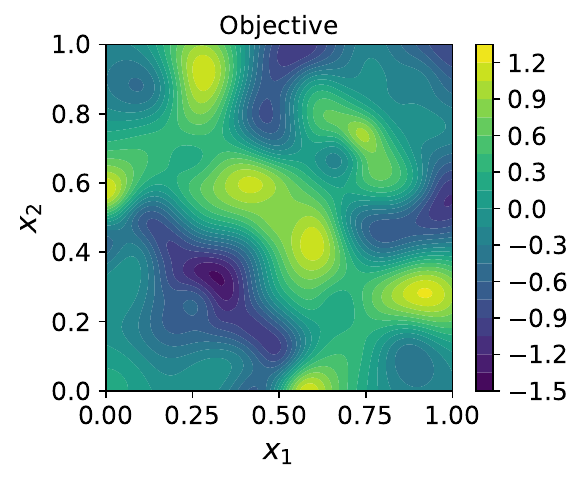}
        \hfill
        \includegraphics[width=0.3\textwidth, valign=c]{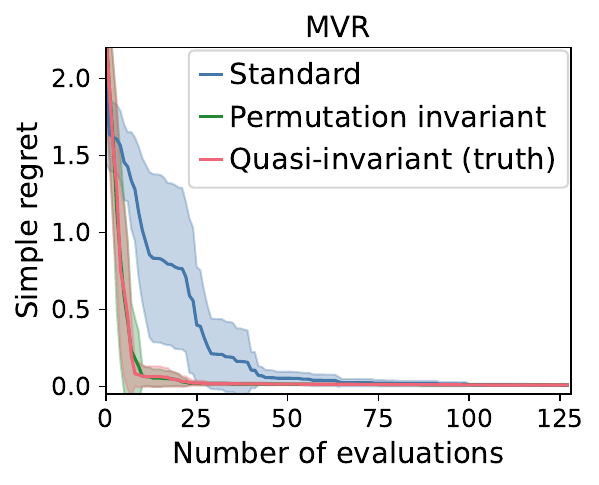}
        \hfill
        \includegraphics[width=0.3\textwidth, valign=c]{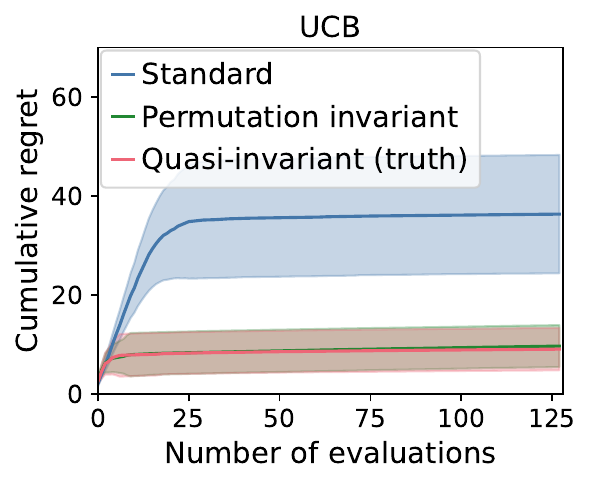}
    \end{subfigure}
    \begin{subfigure}[b]{\textwidth}
        \refstepcounter{subfigure}
        \rotatebox[origin=c]{90}{
            (\thesubfigure) $\varepsilon = 0.05$
        }
        \includegraphics[width=0.3\textwidth, valign=c]{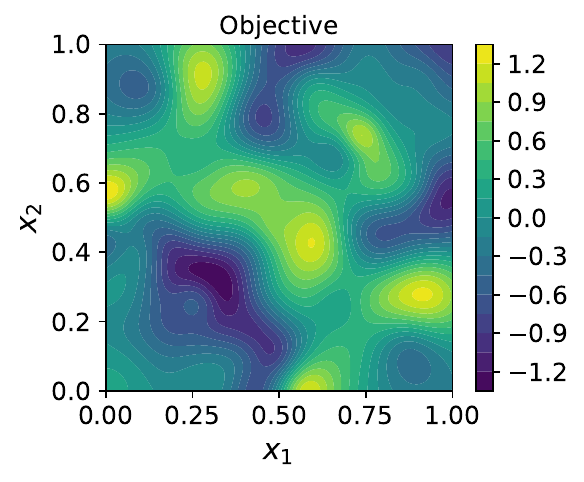}
        \hfill
        \includegraphics[width=0.3\textwidth, valign=c]{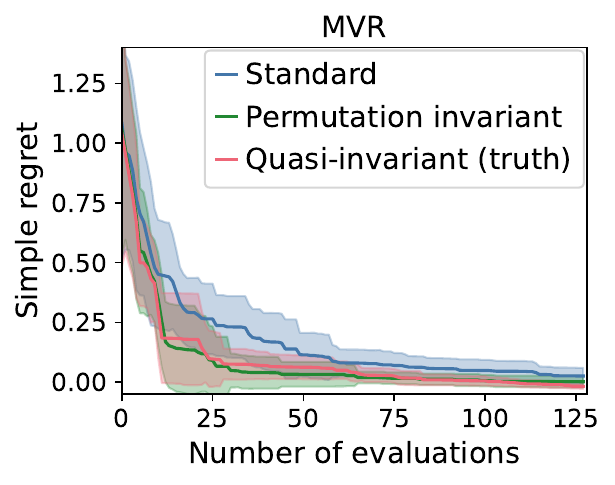}
        \hfill
        \includegraphics[width=0.3\textwidth, valign=c]{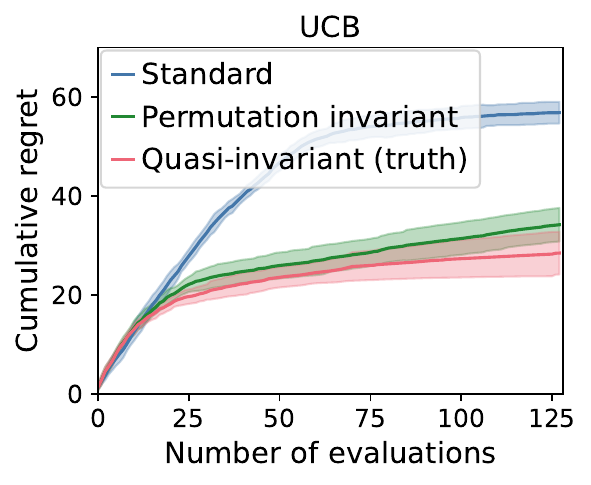}
    \end{subfigure}
    \caption{Performance of MVR and UCB on \textit{quasi-invariant} functions. Regret shown for the noninvariant kernel $k'$ (standard, \textbf{\textcolor{tolblue}{blue}}), the invariant kernel $k_G$ (invariant, \textbf{\textcolor{tolgreen}{green}}), and the quasi-invariant kernel $k_G + \varepsilon k'$ (additive, \textbf{\textcolor{tolred}{red}}). In all cases, the invariant kernel performs almost as well as the true quasi-invariant kernel.\looseness=-1 \label{fig:quasi-invariance}}
\end{figure}

\vspace{-1ex}
\subsection{Real-world application: nuclear fusion scenario design}
\label{sec:nuclear fusion application}
\vspace{-1ex}
\begin{figure}[b]
    \centering
    \begin{subfigure}[t]{0.48\textwidth}
        \includegraphics[width=\textwidth]{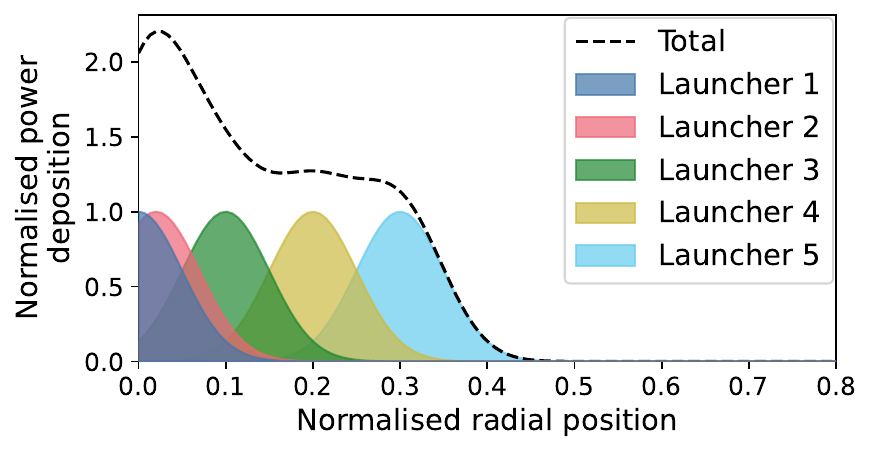}
        \captionsetup{justification=centering}
        \vspace{-3ex}
        \caption{Example current drive actuator profile,\\ shown for 5 launchers}
        \label{fig:fusion-profiles}
    \end{subfigure}
    \hfill
    \begin{subfigure}[t]{0.48\textwidth}
        \includegraphics[width=\textwidth]{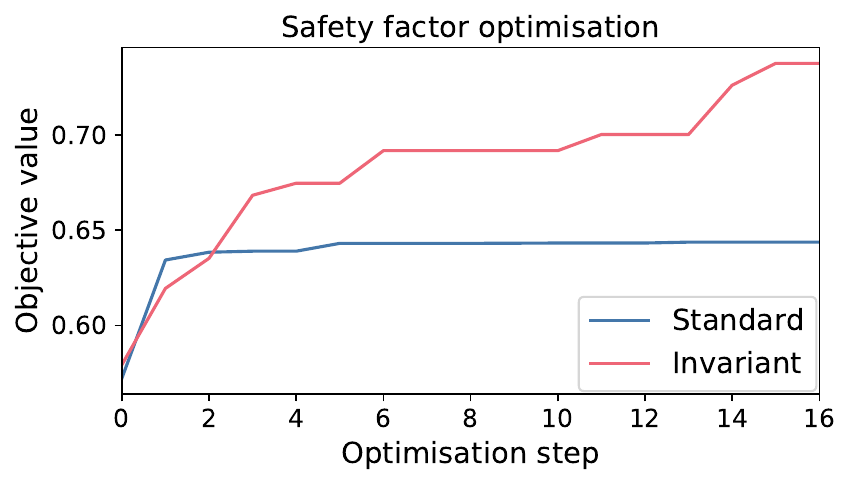}
        \vspace{-3ex}
        \captionsetup{justification=centering}
        \caption{Performance comparison\\ (mean of 5 seeds)}
        \label{fig:fusion-progress}
    \end{subfigure}
    \caption{Nuclear fusion application: optimising safety factor by adjusting current drive actuator. In (a), observe that the order of launchers can be permuted without changing the total profile. Incorporating this invariance into the kernel of the UCB algorithm achieves improved performance (b).\looseness=-1}
    \label{fig:fusion}
\end{figure}
In this section, we use invariance-aware BO to efficiently solve a design task from fusion engineering.

The leading concept for a controlled fusion power plant is the \emph{tokamak}, a device that confines a high-temperature plasma using powerful magnetic fields.
Finding steady-state configurations for tokamak plasma that are simultaneously robustly stable and high-performance is a challenging task.
Moreover, evaluating the plasma operating point requires high-fidelity multi-physics simulations, which take several hours to evaluate one configuration due to the complexity and characteristic timescales of the physics involved \cite{romanelli2014jintrac}.
Iterating on a design requires many such simulations, which has led to an increased focus on highly sample-efficient algorithms for tokamak design optimisation \cite{char2019offline,mehta2022exploration,brown2024multiobjective,marsden2022using}.\looseness=-1

We consider a scalarised version of the multi-objective optimisation task presented in \cite{brown2024multiobjective}, in which the goal is to shape the output of a plasma current actuator to maximise various performance and robustness criteria based on the so-called \textit{safety factor} profile.
Using a weighted sum, we reduce the vector objective to $[0, 1]$, with component weights selected to represent the proposed SPR45 design of the STEP tokamak \cite{tholerus2024flattop, marsden2022using}.
As the objective components are in direct competition, the highest achievable scores are around 0.7 -- 0.8, while the lowest-scoring converged solutions achieve around 0.4 -- 0.5.\looseness=-1

In previous work, the actuator output was represented by arbitrary $1$D functions (piecewise linear \cite{marsden2022using} and B\'ezier curves \cite{brown2024multiobjective}).
However, in practice the actuator will have a finite number of `launchers' that drive current in a localised region, and so will be unable to accurately recreate arbitrary profiles.
In contrast to previous work, we directly optimise a parameterisation that reflects the actuator's real capabilities: a sum of $12$ Gaussians, where each models a launcher targeted at a different point in the plasma cross-section (\cref{fig:fusion-profiles}).
In this setting, the objective function is $f : \mathbb{R}^{12} \to \mathbb{R}$, where the input is the normalised radial coordinate of the 12 launchers.
As all of the launchers have identical behaviour, the objective is invariant to permutations.
However, the full invariant kernel is too costly to compute ($|G| = 12! = 479\times10^6$), so we instead use a partially specified kernel (3-block invariant, $|G| = 4! = 24$).

An additional challenge of this task is that large regions of parameter space are \textit{infeasible}, corresponding to simulations that will not converge to a steady-state solution and will therefore be impossible to observe.
Selecting an appropriate method of dealing with this situation can impact the optimiser's performance, and is an open area of research. In this work, we choose to assign a fixed low score to failed observations for simplicity.
As MVR is purely exploratory it has no mechanism that discourages querying points from infeasible regions, leading to many failed observations.
UCB is a more natural choice in this setting, as the $\beta$ parameter introduces a balance between querying unknown regions (that could be infeasible) and exploiting known high-scoring regions.\looseness=-1

In \cref{fig:fusion-progress}, we see that the invariant version of UCB achieves significantly improved performance compared to the non-invariant version.
In fact, the non-invariant algorithm totally fails to find the small region of high-performance solutions, and instead queries many suboptimal profiles that are similar under the action of permutations.
The high sample efficiency of the invariant kernel method will enable tokamak designers to iterate on designs faster, with fewer calls to expensive simulations.

\vspace{-1ex}
\section{Limitations}
\label{sec:limitations}
\vspace{-1ex}
\begin{figure}[t]
    \begin{subfigure}[t]{0.48\textwidth}
        \includegraphics[width=\textwidth]{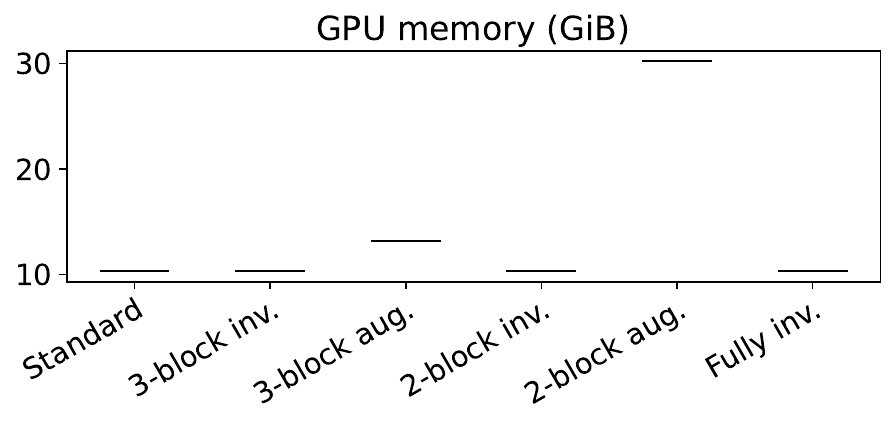}
        \vspace{-3ex}
        \caption{Memory cost}
    \end{subfigure}
    \hfill
    \begin{subfigure}[t]{0.48\textwidth}
        \includegraphics[width=\textwidth]{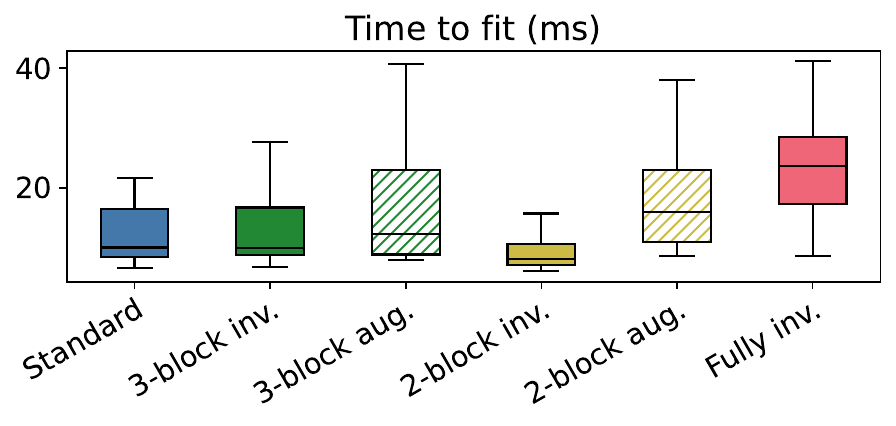}
        \vspace{-3ex}
        \caption{Time cost}
    \end{subfigure}
    \caption{Effect of group size on cost of data augmentation and invariant kernel methods. Benchmark task is to fit a 6D GP with the given kernel to 100 random datapoints from \texttt{PermInv-6D}.
    Shown are results from 100 seeds, 64 repeats per seed, performed on one NVIDIA V100-32GB GPU using BoTorch. Invariant kernels are (a) more memory efficient than data augmentation, and (b) can be computed faster. Incorporating full invariance via data augmentation exceeds the GPU memory.}
    \label{fig:partial-specification-times}
\end{figure}
Although the number of terms involved in computing the totally invariant kernel only scales linearly with the group size (\cref{eq:symmetric_kernel}), this cost can still become a bottleneck for very large groups; for example, the size of the permutation group scales with the factorial of the dimension, leading to $|G| = 24$ for $d=4$, but $|G|=479\times10^6$ for $d=12$.
In \cref{fig:partial-specification-times} we compare the compute and memory costs of  performing a marginal log likelihood fit of the GP hyperparameters for each of the kernels in the previous section, given 100 observations from the \texttt{PermInv-6D} function.
The standard, $3$-block invariant, and $2$-block invariant kernel are faster than the fully invariant kernel by a factor of $2$.
As we observed that the $2$- and $3$-block invariant kernels still achieve significantly improved sample efficiency over the standard kernel (\cref{fig:synthetic-experiments}), we propose that subgroup approximations of the full invariance group can be used as a low-cost alternative when the group size is large, reducing computational cost while maintaining improved performance.
Finally, we note that our invariant kernel method scales significantly more favourably than data augmentation, both in terms of memory and compute.\looseness=-1


\vspace{-1ex}
\section{Conclusion}
\vspace{-1ex}
We have developed new upper bounds on the sample complexity in Bayesian optimization with invariant kernels, highlighting the advantages gained from symmetries that hold for a broad range of practical groups. We also derived novel lower bounds by constructing and quantifying members of the RKHS of invariant functions, extending this approach to the hyperspherical domain and providing a framework for other Riemannian manifolds. Empirically, our experiments show that even partial invariance can effectively improve performance, which is crucial when full invariance is unavailable or computationally intensive. Additionally, we applied these methods to a real-world problem in nuclear fusion engineering, achieving superior outcomes with fewer samples compared to standard methods.\looseness=-1

\textbf{Broader Impact.} The incorporation of symmetries in Bayesian optimization can have a profound impact on science and engineering. By exploiting symmetrical properties in various applications, such as nuclear engineering, material and drug design, and robotic control, it can reduce the number of samples needed to achieve (near) optimal solutions, thus accelerating research and development processes.\looseness=-1

\vspace{-1ex}
\section*{Acknowledgments}
\vspace{-1ex}
TB was supported by the Engineering and Physical Sciences Research Council (EPSRC) grants EP/T517793/1 and EP/W524335/1.
IB was supported by the EPSRC New Investigator Award EP/X03917X/1; EPSRC grant EP/S021566/1; and Google Research Scholar award.




\printbibliography

\newpage
\appendix
\section{Proofs}
\subsection{Proof of \cref{proposition:symmetrisation} (RKHS of invariant functions)}
\label{sec:symmetrisation_proof}

The kernel $k$ defines a reproducing kernel Hilbert space, $\mathcal{H}_k$, with inner product $\langle \cdot, \cdot \rangle_{\mathcal{H}_k}$ and norm $\| f \|_{\mathcal{H}_k} = \sqrt{\langle f, f \rangle_{\mathcal{H}_k}}$ for $f \in \mathcal{H}_k$.

Recall from \cref{eq:symmetrisation} that $S_G(f) = \frac{1}{|G|} \sum_{\sigma \in G} f \circ \sigma$.

We begin by showing that $S_G$ is continuous and bounded.
Observe that simultaneous invariance determines the following property of the representer elements $k_x(\cdot) = k(x,\cdot) = k(\cdot, x)$:
\begin{align}
    k_x \circ \sigma = k_{\sigma^{-1}(x)} \quad \forall \sigma \in G.
\end{align}
Now observe that:
\begin{align}
    ||k_x \circ \sigma||^2 = \langle k_x \circ \sigma, k_x \circ \sigma \rangle = \langle k_{\sigma^{-1}(x)}, k_{\sigma^{-1}(x)} \rangle = k(\sigma^{-1}(x), \sigma^{-1}(x)) = k(x,x) = ||k_x||^2
\end{align}
If $f = \sum_{i=1}^n k_{x_i}$, we have, similarly:
\begin{align}
    ||f\circ\sigma||^2 = ||\sum_{i=1}^n k_{\sigma^{-1}(x_i)}||^2 = \sum_{i=1}^n\sum_{j=1}^n k(\sigma^{-1}(x_i), \sigma^{-1}(x_j)) = \sum_{i=1}^n\sum_{j=1}^n k(x_i, x_j) = ||f||^2
\end{align}
Now, let $f_n \to f$ be a Cauchy sequence, where each $f_n$ lies in the span of the $k_x$ elements.
We have
\begin{equation}
    ||f \circ\sigma|| = ||\lim f_n \circ \sigma|| = \lim ||f_n \circ \sigma|| = \lim ||f_n|| = ||f||
\end{equation}
Therefore,
\begin{equation}
    ||S_G f|| \leq \frac{1}{|G|}\sum_{\sigma \in G} ||f\circ\sigma|| \leq ||f||
\end{equation}
so $S_G$ is continuous, with $||S_G||\leq 1$.

We now verify that the image of $S_G$, denoted as $\image(S_G)$, contains precisely the $G$-invariant functions (\cref{eq:invariance}).
For $f \in \mathcal{H}_k, \sigma \in G$, we have
\begin{align}
    S_G(f) \circ \sigma &= \frac{1}{|G|} \sum_{\tau \in G} (f \circ \tau) \circ \sigma \\
    &= \frac{1}{|G|} \sum_{\tau \in G} f \circ (\tau \circ \sigma) \label{eq:invariance_proof_1} \\
    &= \frac{1}{|G|} \sum_{\sigma' \in G} f \circ \sigma' \label{eq:invariance_proof_2} \\
    &= S_G(f), \label{eq:invariance_proof_3}
\end{align}
where \cref{eq:invariance_proof_1} follows by associativity of $G$, and \cref{eq:invariance_proof_2} follows by letting $\sigma' := \tau \circ \sigma $ and using the fact that $G$ is closed under the $\circ$ operator.
Hence, $S_G(f)$ is invariant to $\sigma \in G$ (\cref{eq:invariance}).

Next, we show that the image of $S_G$ is a reproducing kernel Hilbert space with kernel $k_G$ (\cref{eq:symmetric_kernel}).
First note that $S_G$ is a projection, as applying $S_G$ twice gives
\begin{align}
    S_G(S_G(f)) &= \frac{1}{|G|}\sum_{\sigma \in G} S_G(f) \circ \sigma \\
    &=\frac{1}{|G|}\sum_{\sigma \in G} S_G(f) \label{eq:projection_1} \\
    &= S_G(f), \label{eq:projection_2}
\end{align}
where \cref{eq:projection_1} follows by applying \cref{eq:invariance_proof_3}.

Moreover, $S_G$ is self-adjoint.
Let $f \in \mathcal{H}_k$ be an element in the span of the representer elements $k_x$, such that $f(\cdot)=\sum_{i=1}^n \alpha_i k_{x_i}(\cdot)$, and let $g \in \mathcal{H}_k$. 
Then,
\begin{align}
    \langle S_G(f) , g \rangle_{\mathcal{H}_k} &= \frac{1}{|G|}  \left\langle \sum_{\sigma \in G} f \circ \sigma , g \right\rangle_{\mathcal{H}_k} \label{eq:selfadjoint1} \\
    &= \frac{1}{|G|} \sum_{i=1}^n \sum_{\sigma \in G}  \langle \alpha_i k_{\sigma^{-1}(x_i)} , g \rangle_{\mathcal{H}_k} \label{eq:selfadjoint2} \\
    &= \frac{1}{|G|} \sum_{i=1}^n \sum_{\sigma \in G}  \alpha_i g(\sigma^{-1}(x_i)) \label{eq:selfadjoint3} \\
    &= \frac{1}{|G|} \sum_i \sum_\sigma  \langle \alpha_i k_{x_i} , g\circ \sigma^{-1} \rangle_{\mathcal{H}_k} \label{eq:selfadjoint4}\\
    &= \frac{1}{|G|} \sum_\sigma  \langle f , g \circ \sigma \rangle_{\mathcal{H}_k} \label{eq:selfadjoint5} \\
    &= \langle f, S_G(g) \rangle_{\mathcal{H}_k},
\end{align}
where \cref{eq:selfadjoint2} follows by expanding $f$ and recalling that for simultaneously invariant kernels $k_x \circ \sigma = k(\sigma^{-1}(x), \cdot)$, \cref{eq:selfadjoint3} follows by using the reproducing property of $k_{\sigma(x)}$, \cref{eq:selfadjoint4} uses the decomposition of $g \circ \sigma$ in terms of the reproducing element $k_{x_i}$, and \cref{eq:selfadjoint5} uses the definition of $f$.

Now, as before, let $f_n \to f$ be a Cauchy sequence, where $f_n$ lies in the span of the $k_x$ elements.
We have
\begin{equation}
    \langle \lim f_n, S_G(g) \rangle_{\mathcal{H}_k} = \lim \langle f_n, S_G(g) \rangle = \lim \langle S_G(f_n), g \rangle = \langle \lim S_G(f_n), g \rangle = \langle S_G(\lim f_n), g \rangle_{\mathcal{H}_k},
\end{equation}
by continuity of the inner product.
Hence, $S_G$ is self-adjoint for all $f, g \in \mathcal{H}_k$.

As $S_G$ is a projection (\cref{eq:projection_2}) and is self-adjoint 
$\image(S_G)$ is a closed subspace of $\mathcal{H}_k$.
Now, denoting the evaluation functional $L_x: \mathcal{H}_k \to \mathbb R$, $L_x(f) = f(x)$, note that since $\image(S_G)$ is closed, then if $L_x$ was bounded on $\mathcal{H}_k$, it remains bounded on $\image(S_G)$.
Hence $\image(S_G)$ is itself a RKHS, which we'll denote as $\mathcal{H}'$.

We will now relate $k_{\mathcal{H}'}$, the kernel associated with $\mathcal{H}'$, to $k$ and $G$.
Note that, for $f \in \mathcal{H}'$,
\begin{equation}
    f(x) = \langle f, k_x \rangle = \langle S_G(f), k_x\rangle = \langle f, S_G(k_x) \rangle.
\end{equation}
So $S_G(k_x)$ is the reproducing element for $\mathcal H'$, giving us
\begin{equation}
    k_{\mathcal H'}(x,y) = \langle S_G(k_y), S_G(k_x) \rangle = \langle S_G(k_y), k_x \rangle = \frac{1}{|G|} \sum_{\sigma \in G} k(\sigma(x), y) \label{eq:reproducing_kernel_2},
\end{equation}
which recovers the formula for the kernel in Equation \ref{eq:symmetric_kernel}.

Finally, note that $\langle S_G(k_y), S_G(k_x) \rangle$ recovers the more general "double sum" formula for producing a totally invariant kernel out of a kernel which is not necessarily simultaneously invariant \cite{haasdonk2007invariant,mei2021learning}:
\begin{equation}
    k_G(x,y) = \frac{1}{|G|^2} \sum_{\sigma, \tau \in G} k(\sigma(x), \tau(y)).
    \label{eq:double-sum-kernel}
\end{equation}

\newpage
\subsection{Proof of \cref{thm:upper_bound}: upper bound on sample complexity}
\label{sec:upper_bound_proof}

In this section we derive \cref{thm:upper_bound}, the upper bound on regret for Bayesian optimisation algorithms that incorporate invariance.
The main elements of this proof were introduced in \cite{vakili2021information}.
Our derivation is similar to the one in section C.1 of \cite{kassraie2022neural}, but extends that work to more general groups and kernels.

In \cref{sec:information_gain_general_proof}, we present a concise summary of already known bounds on the information gain of dot-product kernels with polynomial eigendecay on the hypersphere.
In \cref{sec:information_gain_invariant_proof}, we will adapt this argument for information gain of \textit{totally invariant} versions of these kernels.
We then use the bounds on information gain to bound the regret in \cref{sec:from_information_to_regret}.

We begin with the following remark which we use throughout this section. Namely, with the assumptions of Theorem \ref{thm:upper_bound}, a kernel $k$ on $\mathbb S^{d-1}$ which is simultaneously invariant with respect to the whole of $O(d)$ is a dot-product kernel. To see this, choose $r\in [-1,1]$ and pick $x_r, y_r$ such that $\langle x_r, y_r \rangle = r$ and define $\kappa(r) = k(x_r,y_r)$. Observe that $\kappa$ is well defined, as for any other choice of  $(x_r', y_r')$ there is a (non-unique) isometry $M_{(x_r,y_r)}^{(x_r',y_r')}$ such that
\begin{align}
    &x_r' = M_{(x_r,y_r)}^{(x_r',y_r')} x_r\\
    &y_r' = M_{(x_r,y_r)}^{(x_r',y_r')} y_r\\
\end{align}
and therefore $k(x_r,y_r) = k(x'_r,y'_r)$. Hence, throughout this section we can consider dot-product kernels only. 

\subsubsection{Information gain for kernels on the hypersphere}
\label{sec:information_gain_general_proof}

Consider the hyperspherical domain $\mathbb{S}^{d-1} = \{x \in \mathbb{R}^d \ \mathrm{s.t.}\ \|x\| = 1\}$.
The Mercer decomposition of a dot-product kernel on the hypersphere is
\begin{align}
k(x^T x') &= \sum_{k=0}^\infty \mu_k \sum_{i=1}^{N(d,k)} Y_{i,k}(x) Y_{i,k}(x'), \label{eq:mercer_on_sphere}
\end{align}
where $Y_{i,p}$ is the $i$-th spherical harmonic polynomial of degree $k$ and $N(d,k)$ is the number of spherical harmonics of degree $k$ on $\mathbb{S}^{d-1}$, given by 
\begin{align}
N(d, k) = \frac{2k + d - 2}{k} \binom{k + d - 3}{d - 2},
\label{eq:number_of_spherical_harmonics}
\end{align}
Asymptotically in terms of $k$ we have
\begin{align}
    N(d,k) = \Theta\left(k^{d-2}\right). \label{eq:stirling_on_N}
\end{align}

In this representation, $\mu_k$ can be interpreted as an eigenvalue that is repeated $N(d,k)$ times, with corresponding eigenfunctions $Y_{1,k}, \dots, Y_{N(d,k), k}$.
We let $\psi$ be the maximum absolute value of the eigenfunctions, such that $|Y_{i,k}(x)| \leq \psi\ \forall x \in \mathbb{S}^{d-1}$.

We project onto a subspace of eigenfunctions corresponding to the first $D$ eigenvalues.
Let $K$ be the number of distinct values in the first $D$ eigenvalues.
It follows that 
\begin{align}
    & D = \sum_{k=0}^K N(d, k) \\
   c_1 \sum_{k=0}^K k^{d-2} \leq\ & D \leq c_2 \sum_{k=0}^K k^{d-2} \label{eq:d_N_1} \\
  C_1 \frac{K^{d-1}}{d-1} \leq\ & D \leq C_2 \frac{K^{d-1}}{d-1} \label{eq:d_N_2} \\
    & D = \Theta\left(K^{d-1}\right) \label{eq:d_in_terms_of_N}
\end{align}
where \cref{eq:d_N_1} follows by substituting \cref{eq:stirling_on_N}, and \cref{eq:d_N_2} follows by bounding the sum with an integral, and $c_i, C_i$ are constants that ensure the asymptotic equalities hold.

Truncating the eigenvalues at $D$ leaves a tail with mass
\begin{align}
\delta_D \leq \sum_{i=D+1}^\infty \lambda_i \psi^2 &= \psi^2 \sum_{k=K}^\infty \mu_k N(d,k) \label{eq:sphere_tail_1} \\
&\leq \psi^2 C_1 \sum_{k=K}^\infty \mu_k k^{d-2} \label{eq:sphere_tail_2} \\
&\leq \psi^2 C_2 \sum_{k=K}^\infty k^{-\beta^*_p+d-2} \label{eq:sphere_tail_3} \\
&\leq \psi^2 C_3 K^{-\beta_p^* + d - 1} \label{eq:sphere_tail_4} \\
&\leq \psi^2 C_4 D^{\tfrac{-\beta_p^* + d - 1}{d-1}} \label{eq:sphere_tail}
\end{align}
where \cref{eq:sphere_tail_2} follows by substituting \cref{eq:stirling_on_N}, \cref{eq:sphere_tail_3} follows if the \textit{distinct} eigenvalues have polynomial decay rate, $\mu_k = \mathcal{O}(k^{-\beta_p^*})$;\footnote{Mat\'ern kernels on $\mathbb{S}^{d-1}$ satisfy this property with $\beta^*_p = 2\nu + d - 1$ \cite{borovitskiy2020matern}.}, \cref{eq:sphere_tail_4} follows by bounding the sum with an integral; \cref{eq:sphere_tail} follows because $D = \Theta(K^{d-1})$, and hence $K = \Theta(D^{\frac{1}{d-1}})$.

Theorem 3 in \cite{vakili2021information} states that 
\begin{align}
    \gamma_T \leq \frac{D}{2} \log\left(1 + \frac{\bar{k} T}{\tau D}\right) + \frac{T \delta_D}{2\tau}, \label{eq:vakili_theorem_3}
\end{align}
where $|k(x,x')| \leq \bar{k}\ \forall x$ and $\tau$ is the observation noise variance.
Substituting in \cref{eq:sphere_tail} gives
\begin{align}
        \gamma_T \leq \frac{D}{2} \log\left(1 + \frac{\bar{k} T}{\tau D}\right) + \frac{T \psi^2 C_4}{2\tau} D^{\tfrac{-\beta_p^* + d - 1}{d-1}}
\end{align}
We have freedom in setting $D$. We choose it so that the first term dominates, by setting
\begin{align}
    \frac{D}{2} \log\left(1 + \frac{\bar{k} T}{\tau}\right) \geq \frac{T \psi^2 C_4}{2\tau} D^{\tfrac{-\beta_p^* + d - 1}{d-1}}
\end{align}
This is satisfied by
\begin{align}
    D = \left\lceil \left( \frac{T \psi^2 C_4}{\tau \log\left(1 + \frac{\bar{k} T}{\tau}\right)} \right)^{\tfrac{d - 1}{\beta_p^*}} \right\rceil
\end{align}
which gives:
\begin{align}
        \gamma_T \leq \left(\left( \frac{T \psi^2 C_4}{\tau \log\left(1 + \frac{\bar{k} T}{\tau}\right)} \right)^{\tfrac{d - 1}{\beta_p^*}} + 1 \right) \log\left(1 + \frac{\bar{k} T}{\tau}\right) \label{eq:noninvariant_gamma_bound}
\end{align}
where we have used the fact that $\lceil x \rceil \leq x + 1$.
Hence,
\begin{align}
    \gamma_T = \mathcal{O}\left(T^\frac{d-1}{\beta_p^*} \log^{1 - \frac{d-1}{\beta_p^*}} T \right).
\end{align}

\subsubsection{Information gain for symmetrised kernels on the hypersphere}
\label{sec:information_gain_invariant_proof}
Our upper bound involves comparing the corresponding reduction in maximal information across the invariant vs the standard kernel.
In this section we will use overbars to refer to properties of the symmetrised (invariant) kernel. We remind the reader that $G \leq O(d)$ is a finite subgroup of isometries of the sphere $\mathbb S^{d-1}$. 
Our aim will be to provide bounds on the asymptotics of the eigenspace dimensions
$\frac{\bar N (d,k)} {N(d,k)} $ where $N (d,k)$ is defined as in \ref{sec:information_gain_general_proof} and $\bar N (d,k)$ is the corresponding eigenspace dimension for the symmetrized kernel $k_G$.

We will follow the exposition   from \cite{bietti2021sample}, where asymptotic bounds on $\gamma(d,k) :=\frac{\bar N (d,k)} {N(d,k)} $ are given with two minor differences: 1. our treatment will handle any finite group of isometries and 2. we pay the price for generalizing to arbitrary groups by obtaining less strict decay rates, which will not trouble us for our analysis. 

We will show the following:
\begin{lemma}\label{lemma:asymp-of-gamma}
    With the notations described in this section, assume $G \leq O(d)$ for $d\geq 10$ is a finite group. Then, if $-\mbox{I} \in G$:
    \begin{align}
        &0 \leq \gamma(d,k) \leq \frac{C}{k}, \quad \text{for k odd,}\\
        & \frac{2}{|G|} \leq \gamma(d,k) \leq \frac{2}{|G|}+\frac{C}{k}, \quad \text{for k even,}\\
    \end{align}
    and, otherwise:
    \begin{align}
        \frac{1}{|G|} \leq \gamma(d,k) \leq \frac{1}{|G|}+\frac{C}{k}
    \end{align}
    where $C$ is a constant that depends on $k, d$ and the spectra of elements in $G$.
\end{lemma}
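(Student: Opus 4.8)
The plan is to pass from the eigenspace dimension $\bar N(d,k)=\dim\mathcal H_k^{G}$ to a character sum over $G$ and then control that sum by a generating-function computation. Since $\mathcal H_k$, the space of degree-$k$ spherical harmonics on $\mathbb S^{d-1}$, carries a unitary representation of $O(d)$ and hence of $G$, the trace of the averaging projection gives $\bar N(d,k)=\tfrac1{|G|}\sum_{\sigma\in G}\chi_k(\sigma)$ with $\chi_k(\sigma)=\operatorname{tr}(\sigma|_{\mathcal H_k})$. Dividing by $N(d,k)$ and isolating the scalar elements $\sigma=I$ (for which $\chi_k=N(d,k)$) and, if present, $\sigma=-I$ (for which $\chi_k=(-1)^kN(d,k)$) yields
\begin{align}
\gamma(d,k)=\frac{1+(-1)^k\,\mathbf 1[-I\in G]}{|G|}+\frac1{|G|}\sum_{\substack{\sigma\in G\\ \sigma\text{ non-scalar}}}\frac{\chi_k(\sigma)}{N(d,k)} .
\end{align}
So it suffices to show that each non-scalar summand is $O(1/k)$, with a constant depending only on $d$ and the spectrum of $\sigma$.

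For this I would use the harmonic character generating function. From the $O(d)$-module splitting $P_k\cong\mathcal H_k\oplus r^2P_{k-2}$ of homogeneous degree-$k$ polynomials (with $r^2$ invariant) together with $\sum_k\operatorname{tr}(\sigma|_{P_k})t^k=\prod_{j=1}^d(1-\lambda_j t)^{-1}$, where $\lambda_1,\dots,\lambda_d$ are the eigenvalues of $\sigma$, one gets $\sum_{k\ge0}\chi_k(\sigma)t^k=(1-t^2)\prod_{j=1}^d(1-\lambda_j t)^{-1}$. Grouping the eigenvalues into $+1$ (multiplicity $a$), $-1$ (multiplicity $b$) and conjugate pairs $e^{\pm i\theta}$, this rational function has all its poles on the unit circle: order $\max(a-1,0)$ at $t=1$, order $\max(b-1,0)$ at $t=-1$, and order equal to the multiplicity of $e^{i\theta}$ at $t=e^{\pm i\theta}$. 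Standard singularity analysis then gives $|\chi_k(\sigma)|=O(k^{p-1})$, where $p$ is the largest pole order. If $\sigma$ is non-scalar then $a\le d-1$ and $b\le d-1$, so the poles at $\pm1$ have order at most $d-2$; and for $d\ge 10$ any non-real eigenvalue has multiplicity at most $\lfloor d/2\rfloor< d-4$ (this is precisely Assumption~\ref{assum:spectral_prop}, so it is recovered here as a byproduct), hence those poles have order $<d-4<d-2$. Therefore $p\le d-2$ and $|\chi_k(\sigma)|=O(k^{d-3})$; since $N(d,k)=\Theta(k^{d-2})$ by \cref{eq:stirling_on_N}, each non-scalar term is $O(1/k)$ as required.

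Assembling: the non-scalar sum has at most $|G|$ terms, each $O(1/k)$, so it is bounded in modulus by $C/k$ with $C$ depending on $d$ and the spectra of the elements of $G$; combined with the leading term this produces the stated upper bounds in all three cases. The lower bound $\gamma(d,k)\ge 0$ for odd $k$ is immediate since $\gamma$ is a ratio of dimensions; the remaining lower bounds follow from the same two-sided estimate, or, to obtain them exactly, from a sign analysis showing that the dominant residues at $t=\pm1$ of the non-scalar characters are nonnegative for the relevant parity of $k$. The main obstacle is the pole-order bookkeeping of the second step: one must verify that a non-scalar orthogonal transformation can never create a pole of order $d-1$ in its harmonic character — this is exactly what produces the $1/k$ gain — and push the singularity analysis through with constants that are uniform in $k$; the dimensional hypothesis $d\ge10$ enters only to rule out high-multiplicity complex rotations.
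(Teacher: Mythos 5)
Your proposal is correct in its main estimate and lands on the same decomposition as the paper, but it reaches the key decay bound by a genuinely different route. The paper works with $\gamma(d,k,\sigma)=\mathbb{E}_\mu\left[P_{d,k}(\langle\sigma x,x\rangle)\right]$ — which is exactly your normalised character $\chi_k(\sigma)/N(d,k)$, by the addition theorem for spherical harmonics — isolates $\sigma=\pm I$ just as you do, and then invokes Proposition 6 of Bietti et al., which gives $\gamma(d,k,\sigma)\leq Ck^{-d+s}$ with $s=\max_\lambda\{m_\lambda+4\cdot\mathbf{1}(|\lambda|<1)\}$; the hypothesis $d\geq 10$ (and \cref{assum:spectral_prop}) is there precisely to force $-d+s\leq-1$. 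You instead derive the decay self-containedly from the Molien-type generating function $\sum_k\chi_k(\sigma)t^k=(1-t^2)/\det(I-t\sigma)$, using the standard splitting $P_k\cong\mathcal{H}_k\oplus r^2P_{k-2}$ and partial-fraction/pole-order bookkeeping; this is sound, and for a fixed $\sigma$ the constants are uniform in $k$. What your route buys is notable: there is no ``$+4$'' penalty at complex eigenvalues — a pole at $e^{\pm i\theta}$ has order exactly $m_\lambda\leq d/2$ — so your argument needs only non-scalarity of $\sigma$ (giving $m_{\pm1}\leq d-1$ and hence pole order at most $d-2$ at $t=\pm1$), and would go through for much smaller $d$ than $10$; saying that \cref{assum:spectral_prop} is ``recovered as a byproduct'' undersells this — your estimate simply does not need it for this lemma, whereas the paper inherits it from the quoted proposition.

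One caveat: the exact lower bounds $\gamma(d,k)\geq 1/|G|$ (resp.\ $2/|G|$) do not follow from your two-sided estimate, which only yields $1/|G|-C/k$, and the suggested sign analysis of dominant residues controls the asymptotic sign of $\chi_k(\sigma)$ rather than its sign for every $k$ (non-scalar characters on $\mathcal{H}_k$ can genuinely be negative, e.g.\ for small rotations in low dimension). However, the paper's own proof establishes no more than this — it only shows the non-scalar contributions are $O(1/k)$ — and only the upper bound on $\gamma(d,k)$ is used in the subsequent information-gain argument, so on this point you are at parity with the paper rather than behind it.
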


\begin{proof}
    Recall from \cite{bietti2021sample}, the formula:

    \begin{align}
        \gamma(d,k) = \frac{1}{|G|} \sum_{\sigma \in G} \gamma(d,k,\sigma) = \sum_{\sigma \in G} \mathbb E_{\mu}\left[P_{d,k}\left(\langle \sigma x, x \rangle \right)\right],
    \end{align}
where $\mu$ is the uniform measure on the sphere $S^{d-1}$, and $P_{d,k}$ is the corresponding Gegenbauer polynomial.
To demystify the conditional statement in the Lemma, notice first that $P_{d,k} \left(\langle x,x \rangle \right) = P_{d,k}(1) = 1$, and $P_{d,k} \left(\langle -x,x \rangle \right) = P_{d,k}(-1) = 1$ if $k$ is even and -1 if $k$ is odd. For all other $\sigma \in G$, we'll follow the argument in \cite{bietti2021sample} to show that $\gamma(d,k,\sigma) \to 0$ with asymptotics at least $\mathcal O \left( \frac{1}{k}\right)$. 

We don't make significant contributions to the method in \cite{bietti2021sample}, here, other than to remark on the fact that many of the arguments they present follow through for finite subgroups of $O(d)$. 

First remark that for $\sigma \in G \leq O(d)$ there is a matrix representation $A_\sigma$, which admits a canonical basis in which the matrix elements are:

\begin{align}
    \begin{bmatrix}
        \begin{matrix}
            R_{1} & & \\
            & \ddots & \\
            & & R_{k}
        \end{matrix} & 0 \\
        0 & \begin{matrix}
            \pm 1 & & \\
            &\ddots & \\
            & & \pm 1
        \end{matrix} \\
    \end{bmatrix}
\end{align}

where each $R_\lambda$ is a $2 \times 2$    matrix of the form 
$\big(\begin{smallmatrix}
  a & b\\
  -b & a
\end{smallmatrix}\big)$
with eigenvalues $\lambda$ and $\bar \lambda$.

Moreover, notice that since $A_\sigma^{|G|} = \mbox{I}$, all eigenvalues are roots of unity $\lambda = e^{\frac{2\pi i p}{q}}$ with $q \big| |G|$.
The the eigenvalues of such matrices are the same as the allowable eigenvalues of permutation matrices considered in \cite{bietti2021sample}, from which Proposition 6 gives:
\begin{align}
    \gamma(d,k,\sigma) \leq Ck^{-d+s} +o(k^{-d+s})
\end{align}
where $s = \max_{\lambda \in Spec(A_\sigma)} \{m_\lambda + 4 \cdot \textbf{1}(|\lambda| < 1) \}$.
Now assume $\sigma \neq \pm \mbox{I}$. We have $m_1 \leq d-2$, $m_{-1} \leq  d-2$ and $m_\lambda \leq \frac{d}{2}$ if $\lambda \neq \pm 1$. Collectively this implies $-d +s \leq -1$ if $d \geq 10$.
\end{proof}

In the following assume that $-\mbox{I} \notin G$. The corresponding argument for $-\mbox{I} \in G$ is similar.
In view of Lemma \ref{lemma:asymp-of-gamma}
\begin{align}
    \frac{1}{|G|} \leq \frac{\bar N (d,k)}{N(d,k)} \leq \frac{1}{|G|} + \frac{C}{k} \label{eq:invariant_harmonics}
    \end{align}

We take the same number of distinct eigenvalues for the invariant and noninvariant kernels, $K$.
However, there will be different multiplicities for the eigenvalues, leading to a different number of eigenvalues $\bar{D}$.
The corresponding tail has mass
\begin{align}
\bar{\delta}_{\bar{D}} \leq \psi^2 \sum_{i=\bar{D}+1}^\infty \lambda_i &= \psi^2 \sum_{k=K}^\infty \mu_k \bar{N}(d,k) \\
&\leq \psi^2 \sum_{k=K}^\infty \mu_k N(d,k) \left(\frac{1}{|G|} + \frac{C}{k}\right) \label{eq:invariant_tail_1} \\
&\leq \left(\frac{1}{|G|} + \frac{C'}{K}\right) \psi^2 \sum_{k=K}^\infty \mu_k N(d,k) \label{eq:invariant_tail_2} \\
&= \left(\frac{1}{|G|} +\frac{C'}{K}\right) \delta_D \\
& \leq \left(\frac{1}{|G|} +\frac{C''}{D^{\frac{1}{d-1}}}\right) \delta_D \label{eq:invariant_tail_3}\\
& \leq \frac{1 + \epsilon}{|G|}  \delta_D
\end{align}
where \cref{eq:invariant_tail_1} follows by substituting \cref{eq:invariant_harmonics}, \cref{eq:invariant_tail_2} follows because $\frac{1}{k} < \frac{1}{K}$ for $k > K$, \cref{eq:invariant_tail_3} follows because $K = \Theta(D^{\frac{1}{d-1}})$, and for $D>D_\epsilon$ where $D_\epsilon$ satisfies  $d!\frac{C''}{D_\epsilon^{\frac{1}{d-1}}} < \epsilon$.

Now, Theorem 3 in \cite{vakili2021information} gives:
\begin{align}
    \bar\gamma_T \leq \frac{\bar D}{2} \log\left(1 + \frac{\bar{k} T}{\tau \bar D}\right) + \frac{T \delta_{\bar D}}{2\tau},
\end{align}

\begin{proposition}
If $a_n > 0$, $b_n > 0$, with $\lim a_n = \lim b_n = \infty$, and $\lim b_n/a_n = \lambda> 0$, then $\forall \epsilon > 0$, $\exists N$ such that $\forall N' > N$ we have $\sum_{n < N'} b_n \geq (1-\epsilon) \lambda \sum_{n < N'} a_n$.
\end{proposition}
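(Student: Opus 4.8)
The plan is to reduce the claim to a routine $\epsilon$--$N$ argument by splitting each partial sum into a fixed ``head'' $\sum_{n<M}$ and a ``tail'' $\sum_{M\le n<N'}$, on which the hypothesis $b_n/a_n\to\lambda$ can be applied termwise. The divergence $\sum a_n\to\infty$ (which follows from $a_n\to\infty$) is then used to render the head negligible.

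First I would fix $\epsilon>0$; we may assume $\epsilon<2$, since otherwise $(1-\epsilon)\lambda\sum_{n<N'}a_n\le 0<\sum_{n<N'}b_n$ and there is nothing to prove. Using the convergence $b_n/a_n\to\lambda$ with tolerance $\tfrac{\epsilon}{2}\lambda>0$, there is an index $M$ with $b_n/a_n>\lambda\bigl(1-\tfrac{\epsilon}{2}\bigr)$ for all $n\ge M$; multiplying by $a_n>0$ gives $b_n>\lambda\bigl(1-\tfrac{\epsilon}{2}\bigr)a_n$ for every $n\ge M$. Summing this over $M\le n<N'$ and adding back the head, for any $N'>M$ we get
\begin{align*}
\sum_{n<N'}b_n &\ge \sum_{n<M}b_n+\lambda\Bigl(1-\tfrac{\epsilon}{2}\Bigr)\sum_{M\le n<N'}a_n \\
&=\lambda\Bigl(1-\tfrac{\epsilon}{2}\Bigr)\sum_{n<N'}a_n+c,
\end{align*}
where $c:=\sum_{n<M}b_n-\lambda\bigl(1-\tfrac{\epsilon}{2}\bigr)\sum_{n<M}a_n$ does not depend on $N'$ (and may be negative).

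Next I would absorb $c$ into the remaining $\tfrac{\epsilon}{2}$ of slack. Write $S_{N'}:=\sum_{n<N'}a_n$. Since $a_n\to\infty$, the terms $a_n$ are eventually bounded below by a positive constant, so $S_{N'}\to\infty$ as $N'\to\infty$. Hence there is $N\ge M$ such that $\tfrac{\epsilon}{2}\lambda\,S_{N'}\ge -c$ for all $N'>N$ (if $c\ge 0$ take $N=M$; if $c<0$ use $S_{N'}\to\infty$). Then for every $N'>N$,
\[
\sum_{n<N'}b_n\ \ge\ \lambda\Bigl(1-\tfrac{\epsilon}{2}\Bigr)S_{N'}+c\ \ge\ \lambda\Bigl(1-\tfrac{\epsilon}{2}\Bigr)S_{N'}-\tfrac{\epsilon}{2}\lambda S_{N'}\ =\ (1-\epsilon)\lambda\sum_{n<N'}a_n,
\]
which is the desired bound.

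I do not expect a genuine obstacle: the statement is elementary. The only point needing care is that the head constant $c$ can be negative, so it cannot simply be discarded; instead one exploits the divergence $S_{N'}\to\infty$ — which is precisely where the hypothesis $a_n\to\infty$ (rather than merely $a_n\ge 0$, under which the conclusion would fail) enters — to ensure the $\tfrac{\epsilon}{2}$-portion of the slack eventually dominates $|c|$. No stronger machinery (Stolz--Cesàro, monotonicity of the partial sums, etc.) is required.
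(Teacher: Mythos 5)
Your proof is correct and follows essentially the same route as the paper's (one-line) argument: split the sum at the index where $b_n/a_n$ is within the chosen tolerance of $\lambda$, and use the divergence of the partial sums to absorb the fixed head term into the $\epsilon$-slack. Your write-up simply makes explicit the details the paper leaves implicit, including the care needed when the head constant is negative.
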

\begin{proof}
    From the definition of the limit, noting that the $1-\epsilon$ gives slack to bound $\lambda \sum_{n < N} a_n$ above by $\epsilon \sum_{N< n < N'} b_n$.
\end{proof} 

Using this proposition on the sequences $N(d,k)$ and $\bar N(d,k)$ we have further:
\begin{align}
    \bar\gamma_T & \leq \frac{D(1+\epsilon)}{2|G|} \log\left(1 + \frac{\bar{k} T |G|}{\tau D (1+\epsilon)}\right) +  \frac{1 + \epsilon}{|G|} \frac{T \delta_{D}}{2\tau}\\
    & \leq  \frac{1 + \epsilon}{|G|} \left( \frac{D}{2}\log\left(1 + \frac{\bar{k} T |G|}{\tau D (1+\epsilon)}\right) + \frac{T \delta_{D}}{2\tau} \right)\\
    & \leq  \frac{1 + \epsilon}{|G|} \left(\frac{D}{2} \log\left(1 + \frac{\bar{k} T |G|}{\tau D (1+\epsilon)}\right) + \frac{T}{2\tau}  \psi^2 C_4 D^{\tfrac{-\beta_p^* + d - 1}{d-1}}\right)
\end{align}

As previously we'll choose $D_T$ as a function of $T$ such that the logarithmic term dominates (notice that since the first term is increasing in $D$ any choice of $D_T$ greater than this will also satisfy), i.e.
\begin{align}
    D_T \log\left(1 + \frac{\bar{k} T |G|}{\tau(1+\epsilon)}\right) \geq \frac{T}{\tau}  \psi^2 C_4 D_T^{\tfrac{-\beta_p^* + d - 1}{d-1}}
\end{align}
or
\begin{align}
    D_T \geq  \left(\frac{\frac{T}{\tau}  \psi^2 C_4 }{\log\left(1 + \frac{\bar{k} T |G|}{\tau(1+\epsilon)}\right)}\right)^\frac{d-1}{\beta_p^*}
\end{align}

With this choice:
\begin{align}
    \bar\gamma_T \leq  \frac{1 + \epsilon}{|G|} \left(\frac{T}{\tau}  \psi^2 C_4 \right)^\frac{d-1}{\beta_p^*}\log\left(1 + \frac{\bar{k} T |G|}{\tau(1+\epsilon)}\right)^\frac{\beta_p^* -d+1}{\beta_p^*}
    \label{eq:information_gain_for_invariant_kernels}
\end{align}

Hence,
\begin{align}
    \bar\gamma_T = \tilde{\mathcal{O}}\left(\frac{1}{|G|} T^\frac{d-1}{\beta_p^*}\right).
\end{align}

\subsubsection{From information gain to sample complexity}
\label{sec:from_information_to_regret}
Under the assumption of sub-Gaussian noise, it is shown in \cite[Remark 2]{vakili2021optimal} that the maximum variance reduction algorithm presented in \cref{alg:mvr} incurs simple regret satisfying
\begin{equation}
   r_T = \mathcal{O}\Bigg( \sqrt{\frac{\gamma_T \log(T^d / \delta)}{T}} \Bigg)
   \label{eq:mvr_simple_regret_1}
\end{equation}
with probability at least $1 - \delta$, where $\delta \in (0, 1)$.

Note that we are able to use this result without modification, as our approach only incorporates $G$-invariance through the kernel; hence, any changes to the regret due to the $G$-invariance will only appear in the $\gamma_T$ term.
Substituting $\gamma_T$ from \cref{eq:information_gain_for_invariant_kernels} into \cref{eq:mvr_simple_regret_1} gives
\begin{align}
    r_T  &= \mathcal{O}\left(
    \frac{1}{|G|^\frac{1}{2}}
    T^{-\frac{\beta_p^* - d + 1}{2\beta_p^*}}
    \log^\frac{\beta_p^* - d + 1}{2\beta_p^*}(T|G|)
    \log^\frac{1}{2} T/\delta
    \right) \\
    &= \mathcal{O}\left(
    \frac{1}{|G|^\frac{1}{2}}
    T^{-\alpha}
    \log^{\alpha}(T|G|)
    \log^\frac{1}{2} T/\delta
    \right),
    \label{eq:mvr_simple_regret_2}
\end{align}
where we have absorbed factors that do not depend on $T$ or $G$, and let $\alpha = \frac{\beta_p^* - d + 1}{2\beta_p^*}$.

Our goal is for $r_T= \mathcal{O}(\epsilon)$.
This can be achieved by setting
\begin{align}
    T \propto 
    \frac{1}{|G|^\frac{1}{2\alpha}}
    \frac{1}{\epsilon^\frac{1}{\alpha}}
    \log \left(
    \frac{1}{|G|^{\frac{1}{2\alpha} - 1}}
    \frac{1}{\epsilon^\frac{1}{\alpha}}
    \right)
    \log^\frac{1}{2\alpha} \left(
        \frac{1}{|G|^\frac{1}{2\alpha}}
        \frac{1}{\epsilon^\frac{1}{\alpha}}
    \right) \label{eq:T_upper_bound_1}
\end{align}
with appropriate implied constants.
Substitution of $T$ from \cref{eq:T_upper_bound_1} into \cref{eq:mvr_simple_regret_2} makes it evident that for this $T$ we have
\begin{align}
    r_T  = \mathcal{O}(\epsilon) + \mathcal{O}( \log \log T + \log \log |G|).
\end{align}
Substituting $\alpha$ into \cref{eq:T_upper_bound_1} gives the final upper bound on sample complexity, which is
\begin{align}
T = \mathcal{O}\left(
\left(
    \frac{1}{|G|}
\right)^\frac{\beta_p^*}{\beta_p^* - d + 1}
\epsilon^{-\frac{2\beta_p^*}{\beta_p^* - d + 1}}
    \log \left(
    \frac{\epsilon^{-\frac{2\beta_p^*}{\beta_p^* - d + 1}}
}{|G|^{\frac{d - 1}{\beta_p^* -d +1}}}
    \right)
    \log^\frac{\beta_p^*}{\beta_p^* - d + 1} \left(
        \frac{\epsilon^{-\frac{2\beta_p^*}{\beta_p^* - d + 1}}
}{|G|\frac{\beta_p^*}{\beta_p^* - d + 1}}
    \right)
\right).
\end{align}

For the Mat\'ern kernel on the hypersphere, \cite{borovitskiy2020matern} showed that
\begin{align}
    \beta_p^* = 2\nu + d - 1,
\end{align}
and hence, the MVR algorithm with an invariant Matern GP defined on $\mathbb{S}^{d-1}$ will achieve simple regret $\epsilon$ with probability $1-\delta$ if
\begin{align}
    T = \tilde{\mathcal{O}}\left(
\left(
    \frac{1}{|G|}
\right)^\frac{2\nu + d -1}{2 \nu}
\epsilon^{-\frac{2\nu + d -1}{\nu}}
\right),
\end{align}
where $\tilde{\mathcal{O}}$ hides logarithmic factors in $\epsilon$ and $|G|$.


\subsection{Proof of \cref{thm:lower_bound}: lower bound on sample complexity}
\label{sec:lower_bound_proof}

In this section, we derive \cref{thm:lower_bound}, the lower bound on regret for the invariant Mat\'ern kernel under permutation groups.
The main elements of this proof were introduced in \cite{cai2021lower}, which extended the method in \cite{scarlett2017lower}.
This constructive method serves to demonstrate the tightness of the upper bound.
As in \cite{cai2021lower} and \cite{vakili2021optimal}, our method involves building up a set of candidate optimization targets in $\mathcal H_{k_G}$.
However, unlike these papers, it is no longer clear in this setting that such constructions are optimal in terms of the resulting dependence of the lower bound on $|G|$ (the current gap to the upper bound potentially measures this failure rate). Our methods will produce elements in $S_G(\mathcal F) \subseteq \mathcal H_{k_G}$, for a particular set of functions $\mathcal F$, and it's the object of further research if these constructions can be improved.\looseness=-1
In \cref{fig:discrete_invariant_bumps}, we have an example of the types of functions we construct, by applying the operator $S_G$ to a simple bump function. In this example, $G$ is a group of rotations along the vertical axis. In \cref{fig:continuous_invariant_bumps} is an example of a function we cannot retrieve through our method. This function is similarly invariant (also to all other rotations along the axis). Our method relies on a packing argument for the supports of such functions and it is a priori unclear how the tightest packing can be obtained. 

\begin{SCfigure}[][t!]
    \begin{subfigure}[t]{0.25\textwidth}
        \centering
        \includegraphics[width=0.8\textwidth]{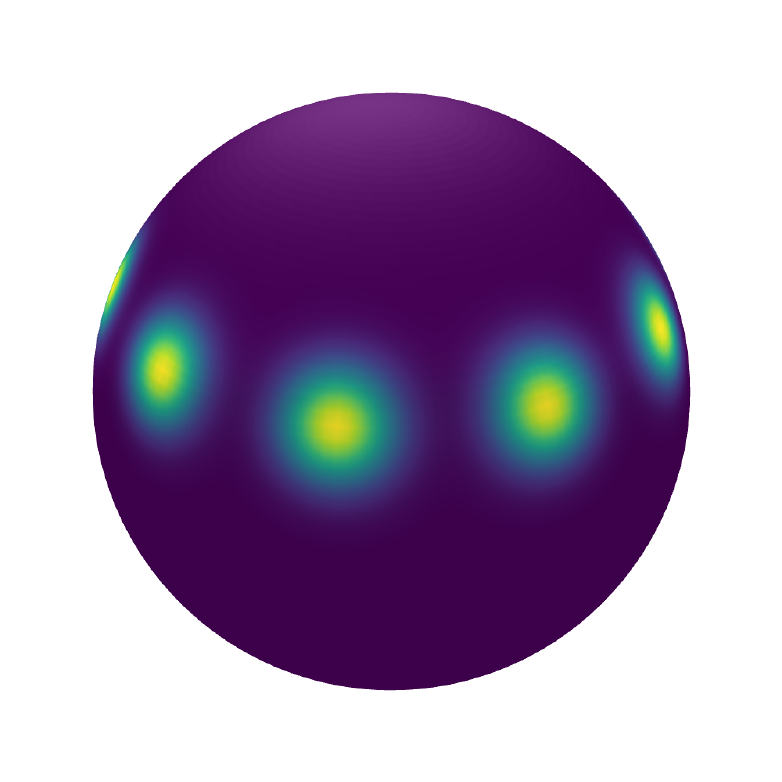}
        \captionsetup{justification=centering}
        \caption{Function invariant to 10-fold rotations}
        \label{fig:discrete_invariant_bumps}
    \end{subfigure}
    \hspace{1em}
    \begin{subfigure}[t]{0.25\textwidth}
        \centering
        \includegraphics[width=0.8\textwidth]{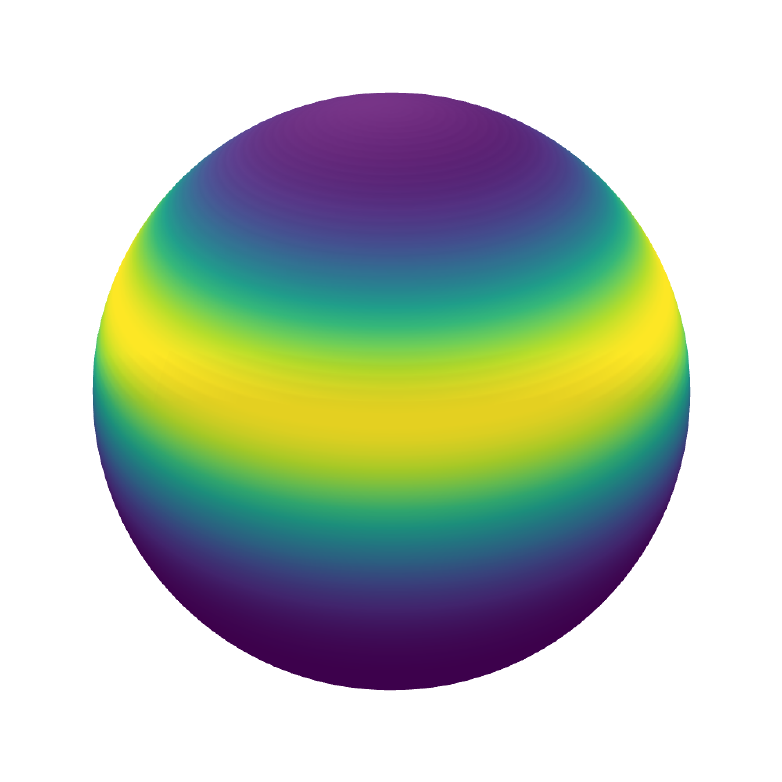}
        \captionsetup{justification=centering}
        \caption{Function invariant to all rotations along an axis}
        \label{fig:continuous_invariant_bumps}
    \end{subfigure}
    \hspace{1em}
    \caption{Examples of invariant functions on the sphere. The construction for our lower bound consists of functions invariant to finite rotations, as in \cref{fig:discrete_invariant_bumps}, but does not include functions like \cref{fig:continuous_invariant_bumps} (which produce different packings).\looseness=-1}
    \label{fig:invariant_bumps_on_sphere}  
\end{SCfigure}

We identify a bounded subset of the RKHS of invariant functions that contains functions with the property that each point in $\mathcal X$ is $\epsilon$-optimal for at most one function.
We then construct a set of noisy bandit problems using functions from this class, before applying \cite[Lemma 1]{cai2021lower} to lower-bound the number of samples that a learner requires in order to distinguish between them, and hence the best-case regret for this configuration.

\subsubsection{An explicit construction}
The proof for our lower bound will be constructive. We will exhibit a family of functions for which there is a computable lower bound of samples needed to distinguish their maxima. The challenge comes from constructing these functions \emph{natively} in the RKHS of invariant functions. The strategy we employ here is to construct functions by symmetrization of simple bump functions. For the benefit of the reader, we'll show an explicit version of this construction first, on the hypercube $\mathcal X = [0, 1]^d$ equipped with the action of $G \leq S_d$ and subsequently extend to general actions of subgroups of $O(d)$ on the sphere $\mathbb S^{d-1}$.

\begin{lemma}[Bounded support function construction {\cite[Lemma 4]{cai2021lower}}]
\label{lemma:function_construction}
    Let $h(x) = \exp \left(\tfrac{-1}{1-\| x\|^2}\right) \altmathbb{1}\{\| x \|_2 < 1\}$ be the $d$-dimensional bump function, and define $g(x, w, \epsilon) = \frac{2\epsilon}{h(0)} h(\frac{x}{w})$ for some $w > 0$ and $\epsilon > 0$. Then, $g$ satisfies the following properties:
    \begin{enumerate}
        \item $g(x, w, \epsilon) = 0$ for all x outside the $\ell_2$-ball of radius $w$ centred at the origin;
        \item $g(x, w, \epsilon) \in [0, 2\epsilon]$ for all $x$, and $g(0) = 2\epsilon$;
        \item $\|g\|_k \leq c_1 \frac{2\epsilon}{h(0)} \left(\frac{1}{w}\right)^\nu \|h\|_k$ when $k$ is the Mat\'ern-$\nu$ kernel on $\mathbb{R}^d$, where $c_1$ is constant. In particular, we have $\|g\|_k \leq B$ when $w = \left(\frac{2\epsilon c_1 \|h\|_k}{h(0) B}\right)^\frac{1}{\nu}$.
    \end{enumerate}
\end{lemma}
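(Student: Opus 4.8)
The plan is to verify each of the three claimed properties of $g(x,w,\epsilon) = \frac{2\epsilon}{h(0)} h(x/w)$ in turn, treating them as essentially elementary consequences of the definition of the bump function $h$ together with the RKHS scaling behaviour of the Mat\'ern kernel. Properties 1 and 2 are immediate: the indicator $\altmathbb{1}\{\|x\|_2 < 1\}$ in the definition of $h$ means $h(x/w)$ vanishes as soon as $\|x/w\| \geq 1$, i.e.\ $\|x\| \geq w$, which gives property 1; and since $h$ takes values in $[0, h(0)]$ on its support (the exponent $\tfrac{-1}{1-\|y\|^2}$ is negative and increases to $0$ as $\|y\|\to 0$, so $h$ is maximised at the origin with $h(0) = e^{-1}$), rescaling by $\tfrac{2\epsilon}{h(0)}$ sends this range to $[0, 2\epsilon]$, with the value $2\epsilon$ attained at $x = 0$. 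That settles property 2.

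The substantive part is property 3, the RKHS-norm bound. First I would recall the two scaling facts for the Mat\'ern-$\nu$ RKHS on $\mathbb{R}^d$: (i) the norm is homogeneous under multiplication by a scalar, $\|c f\|_k = |c|\, \|f\|_k$, so the prefactor $\tfrac{2\epsilon}{h(0)}$ pulls straight out; and (ii) dilation $f(\cdot) \mapsto f(\cdot/w)$ acts on the Mat\'ern-$\nu$ RKHS norm by a factor that behaves like $w^{-\nu}$ for small $w$ (up to a constant depending only on $d, \nu$). Fact (ii) is the one to justify carefully: I would argue it via the Fourier/spectral characterisation of the Mat\'ern RKHS norm, $\|f\|_k^2 \asymp \int (1 + \|\xi\|^2/\,\text{const})^{\nu + d/2} |\hat f(\xi)|^2 \, d\xi$, and track how $\widehat{f(\cdot/w)}(\xi) = w^d \hat f(w\xi)$ transforms the integral; a change of variables $\zeta = w\xi$ gives $\|f(\cdot/w)\|_k^2 \asymp w^d \int (1 + \|\zeta\|^2/(w^2\,\text{const}))^{\nu+d/2} |\hat f(\zeta)|^2 \, d\zeta$, and for $w \leq 1$ the factor $(1 + \|\zeta\|^2/(w^2 c))^{\nu+d/2} \leq w^{-2\nu - d}(1 + \|\zeta\|^2/c)^{\nu+d/2}$, yielding $\|f(\cdot/w)\|_k \leq c_1 w^{-\nu} \|f\|_k$ with $c_1 = c_1(d,\nu)$. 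Applying this with $f = h$ and then reinstating the scalar prefactor gives $\|g\|_k \leq c_1 \tfrac{2\epsilon}{h(0)} w^{-\nu} \|h\|_k$, which is exactly the claimed bound — here one needs $\|h\|_k < \infty$, which holds because $h$ is smooth and compactly supported, hence Schwartz, hence in every Mat\'ern RKHS. Setting the right-hand side equal to $B$ and solving for $w$ produces $w = \big(\tfrac{2\epsilon c_1 \|h\|_k}{h(0) B}\big)^{1/\nu}$, giving the final assertion.

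The main obstacle is the rigorous treatment of the dilation scaling in property 3 on $\mathbb{R}^d$ for general non-integer $\nu$; everything rests on the precise form of the spectral density and on the one-sided inequality being valid for $w$ in a bounded range (here $0 < w \leq 1$, which is harmless since we will ultimately take $w \to 0$ as $\epsilon/B \to 0$). I would be careful to state that $c_1$ absorbs all constants depending on $d$ and $\nu$ (including the equivalence constants in the spectral norm and the $\|h\|_k$ factor if one prefers to bundle it in), and to note that since $h$ is fixed once $d$ is fixed, $\|h\|_k$ is itself a constant; the statement keeps it explicit only to make the dependence on $\epsilon, w, B$ transparent. No subtlety arises for properties 1 and 2. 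This lemma will then be used downstream by translating and symmetrising copies of $g$ to build the packing of invariant functions, but that is outside the scope of the present statement.
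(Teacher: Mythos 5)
Your proposal is correct, but note that the paper itself does not prove this lemma at all: it is imported verbatim as Lemma~4 of \cite{cai2021lower}, and the paper's ``proof'' is just that citation. What you have written is a self-contained justification that fills in that reference, and it follows essentially the standard route used there: properties 1 and 2 are immediate from the definition of the bump function (with $h(0)=e^{-1}$), and property 3 follows from the norm equivalence between the Mat\'ern-$\nu$ RKHS on $\mathbb{R}^d$ and the Sobolev space of order $\nu+d/2$, together with the scaling $\widehat{f(\cdot/w)}(\xi)=w^{d}\hat f(w\xi)$ and the one-sided bound $\bigl(1+\|\zeta\|^{2}/(w^{2}c)\bigr)^{\nu+d/2}\leq w^{-2\nu-d}\bigl(1+\|\zeta\|^{2}/c\bigr)^{\nu+d/2}$ for $w\leq 1$, which gives $\|h(\cdot/w)\|_{k}\leq c_{1}w^{-\nu}\|h\|_{k}$ and hence the stated bound after pulling out the scalar prefactor; finiteness of $\|h\|_{k}$ holds because $h$ is smooth and compactly supported. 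Your two caveats are the right ones and are harmless in context: the bound as you prove it requires $w$ in a bounded range (say $w\leq 1$), which is consistent with the downstream use where $w\to 0$ as $\epsilon/B\to 0$ (\cref{thm:lower_bound} assumes $\epsilon/B$ sufficiently small), and $c_{1}$ absorbs the lengthscale- and dimension-dependent equivalence constants. So there is no gap; the only difference from the paper is that the paper delegates this entirely to \cite{cai2021lower}, whereas you have reproved it.
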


Let $e_1, \dots, e_d$ be the canonical basis vectors in $\mathbb{R}^d$.
Then, the set 
\begin{align}
\label{eq:lattice}
L(\alpha) = \left\lbrace \frac{2\lambda_1 - 1}{2} \alpha e_1 + \dots + \frac{2\lambda_d - 1}{2} \alpha e_d \ : \ \lambda_1, \dots, \lambda_d \in \mathbb{Z} \right\rbrace
\end{align}
forms a lattice of points on $\mathbb{R}^d$ with spacing $\alpha$.
Equivalently, $L(\alpha)$ defines a partitioning of $\mathbb{R}^d$ into disjoint regions or `lattice cells' $\{\mathcal{R}(x_0)\ :\ x_0 \in L(\alpha)\}$, where $\mathcal{R}(x_0)$ is a $d$-dimensional hypercube of side length $\alpha$ centred on $x_0$.

Define the function class
$\mathcal{F}(w, \epsilon) = \left\{ g(x - x_0, w, \epsilon)\ :\ x_0 \in L(w) \right\}$.
By property (1) of \cref{lemma:function_construction},
$g(x - x_0, w, \epsilon)$ is zero outside of $\mathcal{R}(x_0)$.
Hence, the support of functions $f \in \mathcal{F}(w, \epsilon)$ are pairwise disjoint.
In addition, as $\mathcal{F}(w, \epsilon)$ consists only of translates of $g(x, w, \epsilon)$, we retain $\max f = 2 \epsilon$ and $\|f\|_k \leq B\ \forall f \in \mathcal{F}(w, \epsilon)$ for appropriate $w$ from properties 2 and 3 of \cref{lemma:function_construction} respectively.

The hypercube $[0, 1]^d$ contains $N$ of these functions, where
\begin{equation}
    N = \left\lfloor \frac{1}{w} \right\rfloor^d = \left\lfloor \frac{h(0)B}{2 \epsilon c_1 \| h \|_k}  \right\rfloor^{d/\nu}.
    \label{eq:n_standard_functions}
\end{equation}

Now, we construct a set of $G$-\emph{invariant} bump functions with disjoint support on the hypercube $[0, 1]^d$, centred on the lattice cells for a class of groups $G$ equipped with an action which we define below.

\begin{lemma}[Set of invariant functions]
\label{lemma:invariant_lattice}
Let $G$ be a subset of $S_d$, the permutation group on $d$ elements. $G$ acts on  $\mathbb{R}^d$ (and on any invariant subset $\mathcal X$) by permutation of the coordinate axes.
Let $\mathcal{F}(w, \epsilon) = \left\{ g(x - x_0, w, \epsilon)\ :\ x_0 \in L(w) \right\}$.
Then, $\bar{\mathcal{F}}(w, \epsilon, G) = \{ S_G(f) : f \in \mathcal{F}(|G|^{1/\nu}w, |G|\epsilon) \}$, where $S_G$ is defined as in \cref{proposition:symmetrisation}, satisfies the following properties:
\begin{enumerate}
    \item $\bar{\mathcal{F}}(w, \epsilon, G)$ is a set of $G$-invariant functions on $\mathbb{R}^d$ with pairwise disjoint support;
    \item $\max_x \bar{f}_i(x) = \frac{2|G|\epsilon}{|G \orb x_i|}$, where $\bar{f}_i(x) = S_G(g(x - x_i, |G|^{1/\nu}w, |G|\epsilon)) \in \bar{\mathcal{F}}(w, \epsilon, G)$;
    \item $\| \bar{f} \| \leq B\quad \forall \bar{f} \in \bar{\mathcal{F}}(w, \epsilon, G)$.
\end{enumerate}
\end{lemma}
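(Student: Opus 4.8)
The plan is to check the three listed properties directly, leaning on \cref{proposition:symmetrisation} for the invariance and for norm control, and otherwise bookkeeping how the coordinate-permutation action of $G$ moves the lattice $L(W)$ and its cells; throughout write $W := |G|^{1/\nu} w$ and $E := |G|\epsilon$, so that every member of $\bar{\mathcal F}(w,\epsilon,G)$ is $\bar f_i = S_G(f_i)$ with $f_i := g(\cdot - x_i, W, E)$ and $x_i \in L(W)\cap[0,1]^d$.

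\textbf{Invariance and disjoint support (property 1).} Invariance is immediate: by \cref{proposition:symmetrisation} the image of $S_G$ consists of $G$-invariant functions. For disjointness I would use that $\mathrm{supp}(f_i\circ\sigma)=\sigma^{-1}(\mathrm{supp}\,f_i)$ and that a coordinate permutation maps $L(W)$ onto itself, sending the cell $\mathcal R(x_0)$ to $\mathcal R(\sigma x_0)$. Combined with \cref{lemma:function_construction}(1), which confines $\mathrm{supp}\,f_i$ to the cell $\mathcal R(x_i)$, this gives $\mathrm{supp}(\bar f_i)\subseteq\bigcup_{\sigma\in G}\mathcal R(\sigma x_i)=\bigcup_{y\in G\orb x_i}\mathcal R(y)$, a union of cells indexed by the orbit of $x_i$. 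Since distinct $G$-orbits of lattice points pick out disjoint collections of the (pairwise disjoint) cells, the supports of the $\bar f_i$ are pairwise disjoint, and $\bar{\mathcal F}$ is, as a set, indexed by the $G$-orbits of $L(W)\cap[0,1]^d$ (coordinate permutations preserve $[0,1]^d$, so orbits of interior cells stay interior).

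\textbf{Peak value (property 2).} Expanding, $\bar f_i(x)=\tfrac1{|G|}\sum_{\sigma\in G}g(\sigma x-x_i,W,E)$. The key observation is that on the single cell $\mathcal R(x_i)$ only the stabiliser of $x_i$ contributes: a summand $g(\sigma x-x_i,W,E)$ is nonzero only if $\sigma x\in\mathcal R(x_i)$, while $x\in\mathcal R(x_i)$ forces $\sigma x\in\mathcal R(\sigma x_i)$, so disjointness of cells gives $\sigma\in\mathrm{Stab}_G(x_i)$. Hence $\bar f_i(x)\le\tfrac{|\mathrm{Stab}_G(x_i)|}{|G|}\cdot 2E$ on $\mathcal R(x_i)$ (using \cref{lemma:function_construction}(2)), with equality at $x=x_i$, where each of these summands equals $g(0,W,E)=2E$. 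By $G$-invariance the same value is attained on every cell of the orbit and $\bar f_i$ vanishes elsewhere, so the orbit--stabiliser identity $|G|=|\mathrm{Stab}_G(x_i)|\,|G\orb x_i|$ yields $\max_x\bar f_i(x)=2E/|G\orb x_i|=2|G|\epsilon/|G\orb x_i|$.

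\textbf{Norm bound (property 3) and the main obstacle.} Since $S_G$ is a norm-nonincreasing projection (\cref{proposition:symmetrisation}) and the Mat\'ern RKHS norm on $\mathbb R^d$ is translation invariant, $\|\bar f_i\|_k\le\|f_i\|_k=\|g(\cdot,W,E)\|_k$; \cref{lemma:function_construction}(3) bounds this by $c_1\tfrac{2E}{h(0)}W^{-\nu}\|h\|_k$, and substituting $W^{-\nu}=|G|^{-1}w^{-\nu}$ and $2E=2|G|\epsilon$ the two powers of $|G|$ cancel, leaving $c_1\tfrac{2\epsilon}{h(0)}w^{-\nu}\|h\|_k\le B$ for the prescribed $w=\big(2\epsilon c_1\|h\|_k/(h(0)B)\big)^{1/\nu}$ (smallness of $\epsilon/B$ is what makes $w$ small enough for the support balls to sit inside single cells and the orbits to be genuine); because $\mathcal H_{k_G}=\mathrm{Im}(S_G)$ inherits the ambient norm, the same bound holds for $\|\bar f_i\|_{k_G}$. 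The main obstacle is exactly this lattice-action bookkeeping: one must ensure simultaneously that the symmetrised supports stay disjoint and that only the stabiliser acts nontrivially on each cell, since that is what produces the orbit factor $1/|G\orb x_i|$; and the scaling $(W,E)=(|G|^{1/\nu}w,|G|\epsilon)$ must be exactly the one that keeps the RKHS-norm budget intact — a different exponent here would change the $|G|$-dependence of the resulting lower bound and is, as the authors note, the likely origin of the gap with the upper bound.
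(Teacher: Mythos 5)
Your proposal is correct and follows essentially the same route as the paper's proof: $G$-invariance of the lattice and of the image of $S_G$ for property 1, an orbit–stabiliser argument (the paper phrases it via a coset decomposition of $S_G(f_i)$, you evaluate at the centre $x_i$ — equivalent) for property 2, and the cancellation of the $(|G|^{1/\nu}w, |G|\epsilon)$ rescaling in the Mat\'ern norm bound together with $S_G$ being a norm-nonincreasing projection for property 3. No gaps beyond those already present in the paper's own treatment.
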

\begin{proof}
First, we show that $\bar{\mathcal{F}}(w, \epsilon, G)$ is indeed a $S_G$-invariant set of $G$-invariant functions with disjoint support.
As all translates $g(x-x_0)$ are symmetric around $x_0$, it holds that $g(\sigma (x)-x_0) = g(x-\sigma^{-1}(x_0))$.
From \cref{eq:lattice}, it is evident that the lattice $L(\alpha)$ is invariant to permutations; that is, $\sigma(x_0) \in L(w)$ for all $x_0 \in L(w), \sigma \in G$.
Hence, $\mathcal{F}(w, \epsilon)$ itself is invariant under $S_G$, and a fortiori $\bar{\mathcal{F}}(w, \epsilon, G) = S_G \left(\mathcal{F}(|G|^{1/\nu}w, |G|\epsilon)\right)$.


Next, we show that functions in $\bar{\mathcal{F}}(w, \epsilon, G)$ have maximum value $\frac{2|G|\epsilon}{|G \orb x_i|}$.
Let $f_i \in \mathcal{F}(|G|^{1/\nu} w, |G| \epsilon)$ be the function centred on $x_i$.
From \cref{lemma:function_construction}, property 2, $f_i$ has maximum value $2|G|\epsilon$.
Let $G_{x_i} = \{\sigma\, :\, \sigma(x_i) = x_i \}$ denote the stabilizer group of $G$ at $x_i$.

Since the lattice itself is invariant, the image $G \orb x_i$ of the centre will have size 
\begin{equation}
    |G \orb x_i| = \frac{|G|}{|G_{x_i}|}.
\end{equation}
The preimage of any element in $G \orb x_i$ consists of exactly $|G_{x_i}|$ elements.
Therefore,
\begin{align}
    S_G(f_i) &= \frac{1}{|G|} \sum_{H \in G / G_{x_i}}\sum_{\sigma \in H} f_i \circ \sigma\\
    &= \frac{|G_{x_i}|}{|G|} \sum_{H \in G / G_{x_i}} f_i \circ \sigma_H,\\ \label{eq:max_value_1}
\end{align}
where $\sigma_H$ is a representative of the coset $H$ and $f_i \circ \sigma_H$ is the unique value of $f_i \circ \sigma$ on this coset, regardless of its representative.

Note that for any cosets $H$ and $K$, $f_i \circ \sigma_H$ and $f_i \circ \sigma_K$ have disjoint supports from the same arguments as before, and thus, 
\begin{equation}
    \max S_G(f_i) = \frac{|G_{x_i}|}{|G|} \max f_i \circ \sigma = 2 |G| \epsilon \frac{|G_{x_i}|}{|G|} = \frac{2|G|\epsilon}{|G \orb x_i|}.
    \label{eq:maximum_of_symmetrised_functions}
\end{equation}

Finally, we show that functions in $\bar{\mathcal{F}}(w, \epsilon, G)$ have norm bounded by $B$.
Let $\bar{f} \in \bar{\mathcal{F}}(w, \epsilon, G)$ and $f \in \mathcal{F}(|G|^{1/\nu}w, |G|\epsilon)$.
From property (3) in \cref{lemma:function_construction},
\begin{align*}
\|f\|_k &\leq \frac{c_1 2 |G| \epsilon}{h(0)}\left(\frac{1}{|G|^{1/\nu} w}\right)^\nu \|h\|_k \\
&= \|g\|_k \leq B,
\end{align*}
where $w$ is the same as before.
From \cref{proposition:symmetrisation}, recall that $S_G$ is a projection; hence, $\|f\|_k \leq B$ implies $\|S_G(f)\|_{k_G} \leq B$, and so
$\|\bar{f}\|_{k_G} \leq \|f\|_k \leq B$.
\end{proof}

We now restrict the function set to the unit hypercube.
The number of functions from $\mathcal{F}(|G|^{1/\nu}w, |G|\epsilon)$ (the set of rescaled non-invariant functions) that fit in the unit hypercube is given by $N'$, where
\begin{align}
N' &= \left\lfloor \frac{1}{|G|^{1/\nu} w} \right\rfloor^d \\
&=  \left\lfloor \frac{1}{|G|^{1/\nu}} \right\rfloor^d N,
\end{align}
where $N$ is defined in \cref{eq:n_standard_functions}.

Clearly, the number of orbits in $\mathcal{F}(|G|^{1/\nu}w, |G|\epsilon)$, i.e. the number of elements in $\bar{\mathcal{F}}(w, \epsilon, G)$ is equal to the number of orbits of $G$ on $L(w)$, by Lemma \ref{lemma:invariant_lattice}.
We would like to count the number of orbits, or at least bound it from below. For specific groups $G$ one can do this explicitly, however, in general it is impossible to give a count of the orbits, without further assumption.
What we can, however say is that the number of orbits of size exactly $|G|$ represents almost the total number of orbits when $w$ is small. 

\begin{proposition}\label{prop:orbit-strata}
    Let $M = |\bar{\mathcal{F}}(w, \epsilon, G)| =|\left( L(|G|^{1/\nu}w) \bigcap [0,1]^d\right)/G|$. Let $K:=\lfloor \frac{1}{|G|^{1/\nu}w}\rfloor$. Then $M \geq \frac{K^d}{|G|}$, and this bound is asymptotically tight in the limit of $w\to 0$.
\end{proposition}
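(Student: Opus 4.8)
The plan is to read $\bar{\mathcal F}(w,\epsilon,G)$ as the orbit space of a finite $G$-set and to exploit that the $G$-action on the lattice is free away from a lower-dimensional stratum. The underlying set is $\Lambda := L(|G|^{1/\nu}w) \cap [0,1]^d$, of cardinality $N' = K^d$ with $K = \lfloor (|G|^{1/\nu}w)^{-1}\rfloor$ (this is exactly the count computed just before the proposition), and $M = |\Lambda/G|$. The lower bound is then immediate from orbit--stabiliser: $\Lambda$ is the disjoint union of its $G$-orbits, each of size $|G\orb x| = |G|/|G_x| \le |G|$, so $K^d = |\Lambda| = \sum_{\text{orbits}}|G\orb x| \le M\,|G|$, that is $M \ge K^d/|G|$.

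For the tightness claim I would localise the points where the action fails to be free. If $\sigma \in G$ with $\sigma \neq e$ fixes some $x \in \Lambda$, then $x_{\sigma(i)} = x_i$ for every $i$, so $x$ is constant along each cycle of $\sigma$; since $\sigma$ has a cycle of length $\ge 2$, $x$ must have two equal coordinates. Hence every $x$ with nontrivial stabiliser lies in $\bigcup_{i<j}\{x : x_i = x_j\}$, and for a fixed pair $i<j$ the set $\{x \in \Lambda : x_i = x_j\}$ has at most $K^{d-1}$ elements (the common value and the remaining $d-2$ coordinates each range over at most $K$ lattice values). Thus at most $P := \binom{d}{2}K^{d-1}$ points of $\Lambda$ lie in a non-free $G$-orbit.

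To finish, note that any orbit of size strictly less than $|G|$ consists entirely of points with nontrivial stabiliser (stabilisers within an orbit are conjugate, hence of equal size), so it contains at least one of those $\le P$ degenerate points; distinct orbits contain distinct points, so there are at most $P$ non-free orbits. Meanwhile the free orbits are pairwise-disjoint $|G|$-element subsets of $\Lambda$, so there are at most $K^d/|G|$ of them. Combining with the lower bound,
\begin{align}
\frac{K^d}{|G|} \;\le\; M \;\le\; \frac{K^d}{|G|} + P \;=\; \frac{K^d}{|G|} + \binom{d}{2}K^{d-1},
\end{align}
and since $K \to \infty$ as $w \to 0$, multiplying through by $|G|/K^d$ shows $|G|\,M/K^d \to 1$, the claimed asymptotic tightness. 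I expect the only delicate point to be the bookkeeping that links ``orbit of size less than $|G|$'' to ``nontrivial stabiliser'' to ``repeated coordinate'': because $G$ is merely a subgroup of $S_d$, some repeated-coordinate points may still lie in a free orbit, but this only inflates the over-count $P$ and is harmless; the floor functions implicit in $|\Lambda| = K^d$ are likewise already settled by the computation of $N'$ above.
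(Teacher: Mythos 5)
Your proof is correct and follows essentially the same route as the paper's: the lower bound comes from the orbit decomposition with all orbit sizes at most $|G|$, and tightness comes from observing that points with nontrivial stabiliser must have a repeated coordinate and therefore form a vanishing fraction of the lattice as $w \to 0$ (the paper counts the non-degenerate set $\mathcal W(0)$ exactly, you bound its complement by $\binom{d}{2}K^{d-1}$ — the same idea, with your version giving a slightly more explicit upper bound $M \le K^d/|G| + \binom{d}{2}K^{d-1}$).
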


\begin{proof}
    Let $\mathcal W(s)$ be the set of points in $L(|G|^{1/\nu}w) \bigcap [0,1]^d$ with exactly $s\leq d$ repeated indices.
Assume $w$ is such that $K:=\lfloor \frac{1}{|G|^{1/\nu}w} \rfloor > d+1$.  Then, $\mathcal W(0) \neq \emptyset$  under the action of $G$, such points in $L(w)$ will have $|G \orb x| = |G|$.

Points in $\bigcup_{s=1}^d \mathcal W(s)$, i.e. points with repeated indices lie in a union of manifolds of dimension $d-1$ or below within the hypercube $[0,1]^d$. 
As such, $w$ decreases, 
\begin{equation}
    \frac{|\mathcal W(0)|}{N'} \to 1.
\end{equation}
This is easy to see as \begin{equation}
    |\mathcal W(0)|= K\cdot (K-1) \cdot ... (K-d).
\end{equation}

Hence, we have that $|\bar{\mathcal{F}}(w, \epsilon, G)|$ approaches $\frac{|\mathcal F (|G|^{1/\nu}w, |G|\epsilon)|}{|G|}$ from above as the width $w$ tends to $0$:
\begin{align}
    \lim_{w \to 0} \frac{|\bar{\mathcal{F}}(w, \epsilon, G)|}{|\mathcal F (|G|^{1/\nu}w, |G|\epsilon)|} = \frac{1}{|G|}.
\end{align}
\end{proof}

Note that $M$ is also the number of functions from $\bar{\mathcal{F}}(w, \epsilon, G)$ that are supported in $[0, 1]^d$. Then, we can rewrite the above inequality as:
\begin{align}
    M &\geq \frac{N'}{|G|} \\
    &\geq \frac{1}{|G|^{1 + d/\nu}} N,
    \label{eq:lower_bound_on_M}
\end{align}
where $N$ is the number of functions from $\mathcal{F}(w, \epsilon)$ that fit in $[0, 1]^d$, as defined in \cref{eq:n_standard_functions}.
\cref{eq:lower_bound_on_M} gives a direct relationship between the packing of invariant and non-invariant bump functions with bounded norm that fit in the unit hypercube.

Note that the above analysis shows that this bound is tight for small enough $w$ which depends only on $d$ and $\nu$ since $|G| < d!$. If one is interested only in \Cref{eq:lower_bound_on_M}, this is easy to obtain as $M$, the number of orbits is greater than $N'$ divided by the size of the largest orbit.

Note also that in the case of noiseless observations any optimisation algorithm will have to query at least $M$ (resp. $N$) points to be able to distinguish between functions in $\bar{\mathcal{F}}(w, \epsilon, G)$ (resp. $\mathcal{F}(w, \epsilon)$) defined on $[0, 1]^d$; hence, \cref{eq:lower_bound_on_M} (resp. \cref{eq:n_standard_functions}) can also be interpreted as a lower bound on the sample complexity of finding the maximum of these functions.





\subsubsection{Distinguishing between bandit instances}\label{subseq: distinguising-between-bandit}
The subsequent section of the proof mirrors the approach used to derive a simple regret lower bound in the standard (non-invariant) setting, as detailed in \cite[Section 4.2]{cai2021lower}. The primary distinction lies in the construction of the representative functions that exhibit group invariance, as outlined in \Cref{eq:bandit_1} and \Cref{eq:bandit_2}.\looseness=-1 

Recall the definition of a region $\mathcal{R}(\cdot)$ from the discussion following \Cref{eq:lattice}. Define $\bar{\mathcal{R}}_i$ as $\bar{\mathcal{R}}_i = \bigcup_{\sigma \in G} \mathcal{R}(\sigma(x_i))$, where $x_i \in L(|G|^{1/\nu}w)$ is the center of the region. Note that these regions are self-contained under $G$, so that $x \in \bar{\mathcal{R}}_i \implies \sigma(x) \in \bar{\mathcal{R}}_i$, and are pairwise disjoint. Therefore, we define a set of $M$ such regions $\{\bar{\mathcal{R}}_i\}_{i=1}^M$, which partition our domain. The value of $M$ is lower bounded in \cref{eq:lower_bound_on_M}.\looseness=-1

We replace $B$ with $B/3$ in the construction of $\mathcal{F}(|G|^{1/\nu}w, |G|\epsilon)$ (see \Cref{lemma:invariant_lattice}).\footnote{We note that this modification affects only the constant factors (also in the bound on $M$ from \cref{eq:lower_bound_on_M}), which are not important for our analysis.} 
Let $f_i$ and $f_j$ ($i \neq j$) be shifted bump functions from the set $\mathcal{F}(|G|^{1/\nu}w, |G|\epsilon)$, centered at some fixed $x_i$ and $x_j$ respectively, where $x_i, x_j \in L(|G|^{1/\nu}w)$. 
Furthermore, these are selected such that the sizes of the orbits, $|G \cdot x_i|$ and $|G \cdot x_j|$, are both equal to $|G|$. Then, we construct the pair
\begin{align} 
    \bar{f} &= S_G(f_i), \label{eq:bandit_1} \\
    \bar{f}' &= S_G(f_i) + 2 S_G(f_j). \label{eq:bandit_2} 
\end{align}
The following holds $\|\bar{f} \|_{k_G} \leq B/3$ and $\|\bar{f'} \|_{k_G} \leq B$ (from \Cref{lemma:invariant_lattice} and triangle inequality). We observe that $\bar f$ and $\bar f'$  coincide outside $\bar{\mathcal{R}}_j$ i.e., $\bar{f'} - \bar{f}$ is only non-zero in region $\bar{\mathcal{R}}_j$. Moreover, we have $\max_{x \in \bar{\mathcal{R}}_i} \bar{f}(x) = 2 \epsilon$ and $\max_{x \in \bar{\mathcal{R}}_j} \bar{f'}(x) = 4 \epsilon$. It follows that only points from $\bar{\mathcal{R}}_j$ and $\bar{\mathcal{R}}_i$ can be $\epsilon$-optimal for $\bar{f'}$ and $\bar{f}$, respectively.

In the subsequent steps, let $P_f [\cdot]$ and $\mathbb{E}_f [\cdot]$ denote probabilities and expectations with respect to the random noise, given a suitably chosen underlying function $f$. Additionally, we define $P_f(y|x)$ as the conditional distribution $N(f(x), \sigma^2)$, consistent with the Gaussian noise model.

We fix \( T \in \mathbb{Z}^+ \), $\delta \in (0,1/3)$, $B > 0$ and $\epsilon \in (0,1/2)$, and assume the existence of an algorithm such that for any \( f \in \mathcal{F}_{k_G}(B) \), it achieves an average simple regret \( r(x^{(T)}) \leq \epsilon \) with a probability of at least \( 1 - \delta \). Here, \( x^{(T)} \) represents the final point reported by the algorithm.  Note that an algorithm that achieves simple regret less than $\epsilon$ for all \( f \in \mathcal{F}_{k_G}(B) \) automatically solves the decision problem of distinguishing between $f$ and $f'$ as above. If no such algorithm exists, the sample complexity necessary to achieve this bound on regret must be higher than $T$. Next, we establish a lower bound on the number of samples required for such an algorithm to differentiate between two bandit environments (with reward functions $\bar{f}$ and $\bar{f'}$ and the same Gaussian noise model) that share the same input domain but have distinct optimal regions.

Let \( A \) denote the event that the returned point \( x^{(T)} \) falls within the region \( \bar{\mathcal{R}}_i \). Assume that an algorithm achieves a simple regret of at most \(\epsilon\) for both functions \( \bar{f} \) and \( \bar{f'} \) (both $\bar{f}, \bar{f'} \in \mathcal{F}_{k_G}(B)$ as established before) each with a probability of at least \(1 - \delta\). Under our assumption on simple regret, this implies \(P_{\bar{f}}[A] \geq 1 - \delta\) and \(P_{\bar{f'}}[A] \leq \delta\) since only points within \( \bar{\mathcal{R}}_i \) can be \(\epsilon\)-optimal under \( \bar{f}  \), and only points within \( \bar{\mathcal{R}}_{j} \) can be \(\epsilon\)-optimal under \( \bar{f'} \).\looseness=-1

We are now in position to apply \cite[Lemma 1]{cai2021lower}. We apply this lemma using the previously defined $\bar{f}$, $\bar{f'}$, and $A$.\footnote{Following the approach in \cite{cai2021lower}, we deterministically set the stopping time in Lemma 1 to $T$, corresponding to the fixed-length setting in which the time
horizon is pre-specified. For an adaptation to scenarios where the algorithm may determine its stopping time, refer to Remark 1 in \cite{cai2021lower}.}
It therefore holds that, for $\delta \in (0, \frac{1}{3})$:
\begin{align}
    \sum_{m=1}^M \mathbb{E}_{\bar{f}} \left[C_m \right] \max_{x \in \bar{\mathcal{R}}_m} \mathrm{KL}\left[\mathbb{P}_{\bar{f}}(\cdot | x)\ ||\ \mathbb{P}_{\bar{f'}}(\cdot | x)\right] \geq \log \frac{1}{2.4\delta},
    \label{eq:relating_two_instances}
\end{align}
where $C_m$ is the number of queried points in $\bar{\mathcal{R}}_m$ up to time $T$.

For the two bandit instances and any input \(x\), the observation distributions are \(\mathcal{N}(\bar{f}(x), \sigma^2)\) and \(\mathcal{N}(\bar{f}'(x), \sigma^2)\). Utilising the standard result for the KL divergence between two Gaussian distributions with the same variance, we obtain:
\begin{align}
    \max_{x \in \bar{\mathcal{R}}_m} \mathrm{KL}\left[\mathbb{P}_{\bar{f}} (\cdot | x)\ ||\ \mathbb{P}_{\bar{f'}}(\cdot | x)\right] &= \frac{(\max_{x \in \bar{\mathcal{R}}_m} \bar{f}(x) - \max_{x \in \bar{\mathcal{R}}_m} \bar{f}'(x))^2}{2\sigma^2} \\
    &= \begin{cases}
        \frac{8 \epsilon^2}{\sigma^2} & m = j \\
        0 & \text{otherwise},
    \end{cases}
    \label{eq:kl}
\end{align}
as $\bar f(x) = \bar f'(x)$ for $x \notin \bar{\mathcal{R}}_j$, and for $x \in \bar{\mathcal{R}}_j$ we have that $\bar f(x) = 0$ and $\max_{x \in \bar{\mathcal{R}}_j} \bar f'(x) = 4\epsilon$.

Hence, \Cref{eq:relating_two_instances} becomes
\begin{align}
    \frac{8 \epsilon^2}{\sigma^2}  \mathbb{E}_{\bar{f}} \left[C_j \right]  \geq \log \frac{1}{2.4\delta}.
\end{align}
Let $J = \{j \in I | x_j \in \mathcal W (0)\}$ be the set of indices $j$ such that $f_j$ is centred around a point with no repeated indices such as in Proposition \ref{prop:orbit-strata}.
Let $\rho_w = \frac{|J|}{|G|M}$ be the fraction of all orbits of size exactly $|G|$. By Proposition \ref{prop:orbit-strata},  for any small $\delta_w$, there is a correspondingly small enough $w$ such that $\rho_w  \geq 1-\delta_w$.  
Since the previous inequality holds for an arbitrary $j$, we sum over over $j \in \left(J \setminus G\{i\}\right)/G$ to arrive at:
\begin{align}\label{eq:sum-over-function-family}
   \frac{1}{\frac{|J|}{|G|}-1} \sum_{j \neq i} \frac{8 \epsilon^2}{\sigma^2}  \mathbb{E}_{\bar{f}} \left[C_j \right]  \geq \log \frac{1}{2.4\delta}.
\end{align}
By rearranging and noting that $\sum_{j=1}^M \mathbb{E}_{\bar{f}} \left[C_j \right] = T$ we obtain
\begin{align}
    T \geq (M\rho_w-1) \frac{\sigma^2}{8 \epsilon^2} \log \frac{1}{2.4\delta}.
\end{align}
Substituting in $M$ from \cref{eq:lower_bound_on_M} gives
\begin{align}
\label{eq:lower_bound_on_T}
    T \geq \frac{1-\delta_w}{|G|^\frac{\nu + d}{\nu}} N \frac{\sigma^2}{8 \epsilon^2} \log \frac{1}{2.4\delta} - \frac{\sigma^2}{8 \epsilon^2} \log \frac{1}{2.4\delta},
\end{align}
and using $N$ from \cref{eq:n_standard_functions} gives the final bound
\begin{align}
    T = \Omega\Bigg( \tfrac{1}{|G|^\frac{\nu + d}{\nu}} \frac{\sigma^2}{\epsilon^2} \left(\frac{B}{\epsilon}\right)^{d/\nu}\log \frac{1}{\delta}\Bigg),
\end{align}
where the implied constants can depend on $d$, $l$, and $\nu$.

\subsection{Lower bounds on different underlying spaces.}

In this section we'll investigate how to raise the lower bounds on sample complexity to embedded submanifolds $\mathcal X \hookrightarrow \mathbb R^d$. The full treatment below is valid only for the hypersphere, 
$\mathcal X = \mathbb S^d$ but many of the ingredients required are valid on other manifolds as well. When specific statements about the sphere are made in the treatment below, we will explicitly refer to $\mathbb S^d$ instead of $\mathcal X$.

In the proof in \ref{sec:lower_bound_proof}, we used the existence of a $G$-invariant lattice to underpin our bump function construction. Moreover, we've used the fact that the RKHS norm $B:= \|g\|_k \sim \frac{\epsilon}{w^\nu}$. We will replicate the underlying function class construction for  $\bar{\mathcal{F}}(w, \epsilon, G)$ here.

We will recover the same property for bump functions on $\mathcal X$ but under a different notion of rescaling.

Pick a point $p\in U \subset \mathcal X$ and consider $(y^1, ..., y^m)$ a system of coordinates on $U$ so that $x_i(y^1,...,y^m)$ represent the standard coordinates on the ambient space $\mathbb R^d$. 
Let $f_{p, w, \epsilon}: \mathbb R^d \to \mathbb R_+$ be a bump function around $p$ which is 0 outside $B_w(p; \mathbb R^d)$ and supported in $B_{w/2}(p; \mathbb R^d)$ with height $\epsilon$, as in section \ref{sec:lower_bound_proof}. Let $\phi_{p,w,\epsilon} = f_{p,w,\epsilon}\bigr|_\mathcal X$.

If $\mathcal X = \mathbb S^d$, then the intersection $B_{r_w}(p;\mathcal X) := B_w(p; \mathbb R^d) \bigcap \mathcal X$ is a geodesic neighbourhood with geodesic radius $r_w = \arccos\left(\frac{2-w^2}{2}\right)$. For small enough $w$, 
\begin{align} \label{eq:rad-taylor}
    w \leq r_w \leq w + w^2, 
\end{align}
by computing the Taylor expansion.

We will lso need to compute the RKHS norm for $\phi_{p,w,\epsilon}$ for some suitably defined kernel. Let $k_\nu: \mathbb R^d \times \mathbb R^d$ be the standard Mat\'ern kernel, and consider $\tilde k_\nu$ its restriction to $\mathcal X$. 

Next, we recall the existence of a fundamental domain for the action of finite groups $G$ on $\mathcal X$. Following the definition of fundamental domain from \cite{Kapovic:actions}, we say $F \subset \mathcal X$ is a fundamental domain for the action of $G$ if:
\begin{itemize}
    \item $F$ is a domain.
    \item $G\cdot \bar{F} = \mathcal X$
    \item $gF \bigcap F \neq \emptyset \implies g = e$, the identity.
    \item For any compact $K \subset \mathcal X$, the \emph{transporter set} $(\bar{F}|K)_G$  is finite.
\end{itemize}

Note that the last condition is trivial if $G$ is finite. The construction of $F$ is simple given that $G$ acts by isometries of the underlying space $\mathbb R^d$ and $g_{\mathcal X} = \iota^*g_{\mathbb R^d}$. 

Indeed, let $x$ be a point such that $|Gx| = |G|$ (such a point trivially exists, as each member of $G$ other than the identity fixes a subspace of dimension at most $d-1$) on the $\mathcal X = \mathbb S^{d-1}$. Then, $|Gx| = |G|$, and the Dirichlet domain:
\begin{equation}
    F_x = \{y\in \mathcal X | d_{\mathcal X}(x, y) < d_{\mathcal X}(gx, y), \quad \forall g \in G \setminus \{e\}\}
\end{equation}

is a fundamental domain for the action (see \cite{Kapovic:actions} for proof). One can see this set of points as the interior of the points in $\mathcal X$ which are closest to $x$ rather than any of its other translates under $G$. For example, for the action of $S_d$ on the positive orthant $\mathbb S^{d-1}$, it is any of the simplices in the barycentric subdivision.

Equipped with this definition, we construct the lattice of bump function centres on $\mathcal X$ as follows.
First construct a $2r_w$-net $L(r_w;F_x)$ on $F_x$ as the set of centres of an optimal geodesic ball packing of $F_x$. Then define
\begin{align}
    L(r_w) := L(r_w;\mathcal X) = G\cdot L(r_w; f_x) 
\end{align}

We will need to compare the cardinalities of $L(r_w)$ and $L(tr_w)$ for some scaling factor $t > 0$. 
Let $\Pi(r_w)$ be the packing number of $F \subseteq \mathbb S^{d-1}$ by geodesic balls of radius $r_w$. 

We have trivially that 
\begin{align} \label{eq:ineq-cov-pack}
   \frac{Vol(F)}{Vol(B_{2r_w}(p;\mathcal X))} \leq \Pi(r_w) \leq \frac{Vol(F)}{Vol(B_{r_w}(p;\mathcal X))}
\end{align}
where the first inequality comes trivially from noticing that a maximal $r_w$ packing is a $2r_w$ covering.
Moreover, since the exponential map is a diffeomorphism on small enough neighbourhoods (with derivative $\mbox{I}$ at the origin), we have that:
\begin{align}\label{eq:ineq-euclid-volume}
    K_1 Vol(B_{r_w}(p;\mathbb R^{d-1})) \leq Vol(B_{r_w}(p;\mathcal X)) \leq K_2 Vol(B_{r_w}(p;\mathbb R^{d-1}))
\end{align}

for some $0 < K_1 < K_2$. Combining equations \ref{eq:ineq-cov-pack} and \ref{eq:ineq-euclid-volume} gives:
\begin{align}\label{eq:d-dep}
    c_1 t^{d-1}\Pi(r_w) \leq \Pi(tr_w) \leq c_2 t^{d-1}\Pi(r_w)
\end{align}

Following this discussion, we have the following lemma which is an extension to Lemma \ref{lemma:invariant_lattice}. Denote by $w_r$ the inverse mapping of $w \to r_w$, defined on small enough geodesic balls on $\mathbb S^{d-1}$. 
We write $\mathcal{F}(w, \epsilon) = \left\{ f_{p,w,\epsilon}\ :\ p \in L(r_w) \right\}$ for a set of functions on the ambient space and $\mathcal F_{\mathcal X}(r_w, \epsilon)  = \left\{ \phi_{p,w,\epsilon}:= f_{p,w,\epsilon}\big|_{\mathcal X}\ :\ p \in L(r_w) \right\}$ for the restriction $\mathcal F(w, \epsilon)\big|_{\mathcal X}$. Similarly, we write 
$\bar{\mathcal{F}}(w, \epsilon, G) = \{ S_G(f) : f \in \mathcal{F}(|G|^{1/\nu}w, |G|\epsilon) \}$ and $\bar{\mathcal{F}}_{\mathcal X}(w, \epsilon, G) = \bar{\mathcal{F}}(w, \epsilon, G) \big|_{\mathcal X}$

\begin{lemma}[Invariant functions on the sphere]
\label{lemma:invariant_lattice_sphere}
The sets $\bar{\mathcal{F}}_{\mathcal X}(r_w, \epsilon, G)$ and $\bar{\mathcal{F}}(w, \epsilon, G)$ satisfy the following properties.
\begin{enumerate}
    \item $\bar{\mathcal{F}}(w, \epsilon, G)$ is a set of $G$-invariant functions on $\mathbb{S}^{d-1}$ with pairwise disjoint support;
    \item $\max_x S_G(f_{p,w, \epsilon})(x) = \frac{2|G|\epsilon}{|G \cdot p|}$.
    \item $\| \bar{f} \| \leq B\quad \forall \bar{f} \in \bar{\mathcal{F}}_{\mathcal X}(w, \epsilon, G)$.
\end{enumerate}
\end{lemma}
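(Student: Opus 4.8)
The plan is to mirror the proof of \cref{lemma:invariant_lattice}, replacing the flat lattice $L(\alpha)$ on $[0,1]^d$ with the geodesic net $L(r_w)$ on $\mathbb{S}^{d-1}$ constructed above from a Dirichlet fundamental domain $F_x$, and replacing Euclidean rescaling by the geodesic-radius rescaling $w\mapsto r_w$ with its Taylor estimate $w \le r_w \le w + w^2$ from \cref{eq:rad-taylor}. First I would establish property (1): the centres $L(r_w) = G\cdot L(r_w; F_x)$ form a $G$-invariant set by construction, and since each $\phi_{p,w,\epsilon}$ is supported in $B_{w/2}(p;\mathbb{R}^d)$, hence in $B_{r_{w/2}}(p;\mathcal X) \subseteq B_{r_w}(p;\mathcal X)$, the chosen $2r_w$-net spacing guarantees that the supports of distinct translates are pairwise disjoint; applying $S_G$ permutes these supports (using $f_{p,w,\epsilon}\circ\sigma = f_{\sigma^{-1}(p),w,\epsilon}$ as in the flat case, which holds because $\sigma$ is an ambient isometry fixing $p$'s distances), so the images remain disjoint across orbits.

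Next I would prove property (2) exactly as in \cref{lemma:invariant_lattice}: writing $G_p$ for the stabilizer of $p$ and decomposing $S_G(f_{p,w,\epsilon}) = \frac{|G_p|}{|G|}\sum_{H\in G/G_p} f_{p,w,\epsilon}\circ\sigma_H$, the summands have disjoint support, so $\max S_G(f_{p,w,\epsilon}) = \frac{|G_p|}{|G|}\cdot 2|G|\epsilon = \frac{2|G|\epsilon}{|G\cdot p|}$ by orbit–stabilizer. This argument is purely group-theoretic and transfers verbatim. The substantive new ingredient is property (3), the RKHS-norm bound. Here I would proceed in two stages. First, control the ambient norm: by \cref{lemma:function_construction}(3), $\|f_{p,w,\epsilon}\|_{k_\nu} \le c_1\frac{2\epsilon}{h(0)}w^{-\nu}\|h\|_{k_\nu}$, and restriction to a submanifold is norm-non-increasing for the restricted kernel $\tilde k_\nu = k_\nu|_{\mathcal X}$ (the restriction map $\mathcal H_{k_\nu}\to\mathcal H_{\tilde k_\nu}$ is a metric surjection), so $\|\phi_{p,w,\epsilon}\|_{\tilde k_\nu} \le \|f_{p,w,\epsilon}\|_{k_\nu}$. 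Second, since $S_G$ is a projection on $\mathcal H_{\tilde k_\nu}$ (\cref{proposition:symmetrisation} applied with kernel $\tilde k_\nu$, which is simultaneously $G$-invariant because $G$ acts by isometries), $\|S_G(\phi_{p,w,\epsilon})\|_{(\tilde k_\nu)_G} \le \|\phi_{p,w,\epsilon}\|_{\tilde k_\nu}$. Chaining these, and choosing $w$ as before (rescaled by $|G|^{1/\nu}$ and $\epsilon$ by $|G|$ in the definition of $\bar{\mathcal F}_{\mathcal X}$), yields $\|\bar f\|_{(\tilde k_\nu)_G} \le B$.

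I expect the main obstacle to be the legitimacy of the norm-restriction step for Matérn kernels on $\mathbb{S}^{d-1}$: one must argue that $f_{p,w,\epsilon}$ — defined as an ambient bump — actually lies in $\mathcal H_{k_\nu}$ and that its restriction's $\tilde k_\nu$-norm is genuinely bounded by the ambient norm, rather than relating to the intrinsic Matérn kernel of \cite{borovitskiy2020matern} whose eigendecay $\beta_p^* = 2\nu+d-1$ was used in the upper bound. The cleanest route is to keep the kernel as the \emph{restricted} ambient Matérn throughout the lower-bound construction (the statement of \cref{thm:lower_bound} permits $\mathcal X = \mathbb S^d$ with ``the standard Mat\'ern kernel on $\mathbb R^d$''), so that no intrinsic/extrinsic comparison is needed and the restriction map is the trivial norm-non-increasing one. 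A secondary subtlety is that the geodesic-radius distortion $r_w \in [w, w+w^2]$ changes the width-to-norm scaling only by a factor $1 + O(w)$, which is absorbed into constants for $w$ small enough depending on $d,\nu$ — exactly the regime ``sufficiently small $\epsilon/B$'' in the theorem. Combined with \cref{eq:d-dep}, which gives the $t^{d-1}$ scaling of packing numbers under width rescaling on $\mathbb S^{d-1}$ needed to reproduce the counting bound analogous to \cref{eq:lower_bound_on_M}, this completes the extension.
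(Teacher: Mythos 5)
Your proposal is correct and follows essentially the same route as the paper: property (1) via the key identity $f_{p,w,\epsilon}\circ\sigma = f_{\sigma^{-1}(p),w,\epsilon}$ together with $G$-invariance of the net $L(r_w)$, property (2) verbatim from the orbit–stabilizer/disjoint-support argument of \cref{lemma:invariant_lattice}, and property (3) via the restriction-norm inequality $\|\phi_{p,tw,\epsilon}\|_{\tilde k_\nu} \leq \inf_{\phi^e}\|\phi^e\|_{k_\nu} \leq \|f_{p,tw,\epsilon}\|_{k_\nu} = t^\nu\|f_{p,w,\epsilon}\|_{k_\nu}$ followed by contractivity of $S_G$. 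Your remark that the kernel must be the \emph{restricted} ambient Mat\'ern $\tilde k_\nu$ (not the intrinsic one) matches the paper's setup exactly.
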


\begin{proof}
    The key ingredient for 1. is to note that each $f_{p,w,\epsilon}(x) = g(\frac{x-p}{w})$, and thus, each $f$ is $G$-invariant around the point $p$, namely, for $\sigma \in G$, $f_{p,w,\epsilon}\circ \sigma = f_{\sigma^{-1}(p),w,\epsilon}$. Then, since $L(r_w)$ is $G$-invariant by construction, 1. follows.
    For 2. the proof is identical to Lemma \ref{lemma:invariant_lattice} 2.
    For 3. notice that
    \begin{align}
    \|\phi_{p,tw,\epsilon}\|_{\tilde k_\nu}  \leq \inf_{\phi^e} \|\phi^e_{p,tw,\epsilon}\|_{k_\nu} \leq  \|f_{p,tw,\epsilon}\|_{k_\nu} = t^\nu \|f_{p,w,\epsilon}\|_{k_\nu}
\end{align} 
where the second inequality follows from the restriction and $\phi^e$ is any extension of $\phi$ to the underlying space, since the norm of a restriction is less than that of all extensions. 
    
\end{proof}

Let $N = |\mathcal{F}(w, \epsilon)|$, $M = \bar{\mathcal{F}}(w, \epsilon, G)$. Recall that these functions are defined in terms of the packing numbers of $F$, so that $N = |G|\Pi(r_w)$ and $M = \Pi(r_{|G|^{1/\nu}w})$.
Pick $\delta_w > 0$. For all small enough $w$, we have that:
\begin{align}
    M \geq \Pi(|G|^{1/\nu}(1+\delta_w)r_{w}) \geq c_1 |G|^\frac{d-1}{\nu} (1+\delta_w)^{d-1} \Pi(r_w) = c_3 |G|^{1+\frac{d-1}{\nu}} N 
\end{align}

where the first inequality comes from equation \ref{eq:rad-taylor} and the fact that the packing number is a decreasing function of the radius, the second inequality comes from equation \ref{eq:d-dep} and $c_3$ is a constant which depends on the ambient dimension and the curvature of the space $\mathcal X$.

Following the arguments of section \ref{subseq: distinguising-between-bandit} we see, that by construction, $L(r_w)$ admits only orbits of size $|G|$, end hence in Equation \ref{eq:sum-over-function-family}, the sum is over all elements in $\bar{\mathcal F_{\mathcal X}}(r_w, \epsilon, G)$. Hence we recover the same lower bound on sample complexity:

\begin{align}
    T = \Omega\Bigg( \tfrac{1}{|G|^\frac{\nu + d-1}{\nu}} \frac{\sigma^2}{\epsilon^2} \left(\frac{B}{\epsilon}\right)^\frac{d-1}{\nu}\log \frac{1}{\delta}\Bigg),
\end{align}

This concludes the proof of Theorem \ref{thm:lower_bound}. 

The key element is the original construction of the functions $f_{p, w, \epsilon}(x)$, which are of the form $g(x-p)$ for some bump function constructed around the origin which \emph{is itself symmetric} with respect to the action of $O(d)$ (and hence its subgroups).

\newpage
\section{Experimental details}
\label{sec:experimental details}

\subsection{Synthetic experiments}

We use an isotropic Mat\'ern-\nicefrac{5}{2} kernel as the base kernel $k$, and compute $k_G$ according to \cref{eq:symmetric_kernel}.
To generate the objective functions, we first generate a large number of samples $n$ from a zero-mean GP prior with kernel lengthscale $l=0.12$. We scale $n$ with the dimensionality of the problem $d$, so that $n=64$ for $d=2$, $n=256$ for $d=3$, and $n=512$ for $d=6$, to mitigate sparsity.
We then use BoTorch to fit a noiseless invariant GP to that data, and treat the mean of the trained GP as our objective function. In this way, we ensure that the objective function belongs to the desired RKHS.

During the BO loop, the learner was provided the true lengthscale, magnitude, and noise variance of the trained kernel, to avoid the impact of online hyperparameter learning. For UCB, we use $\beta=2.0$, except in the quasi-invariant case with $\varepsilon=0.05$ where we found that increasing $\beta$ was necessary to achieve good performance ($\beta=3.0$). MVR has no algorithm hyperparameters.

For the constrained BO baseline, we replaced BoTorch's default initialisation heuristic with rejection sampling in order to generate multiple random restarts that lie within the fundamental domain. We found that this improved the diversity of the initial conditions, and hence boosted the performance of the acquisition function optimiser.

In all other cases BoTorch defaults were used.

\subsection{Fusion experiment}

We consider the 3-block permutation group, with $|G| = 4! = 24$.
The kernel hyperparameters are learned offline, based on 256 runs of the fusion simulator with parameters generated by low-discrepancy Sobol sampling.
Due to the cost of evaluating the objective function we use a batched version of the acquisition function as implemented by BoTorch, querying points in batches of 10. 
We use the multi-start optimisation routine from BoTorch for finding the maximum acquisition function value.


\end{document}